\newtheorem{proposition}{Proposition}
\def\BibTeX{{\rm B\kern-.05em{\sc i\kern-.025em b}\kern-.08em
    T\kern-.1667em\lower.7ex\hbox{E}\kern-.125emX}}
\begin{document}
\title{TA-LSDiff:Topology--Aware Diffusion Guided by a Level Set Energy for Pancreas Segmentation}

\author{Yue Gou, Fanghui Song, Yuming Xing, Shengzhu Shi, Zhichang Guo and Boying Wu
\thanks{This work is supported in part by the National Natural Science Foundation of China (12171123, 12271130, 12371419, 12401557, U21B2075).(Corresponding authors: Fanghui Song.)}
\thanks{Yue Gou, Fanghui Song, Yuming Xing, Shengzhu Shi, Zhichang Guo\\
and Boying Wu are with the Department of Computational Mathematics,\\
School of Mathematics, Harbin Institute of Technology, Harbin, 150001,\\
China. }}

\maketitle
\begin{abstract}
Pancreas segmentation in medical image processing is a persistent challenge due to its small size, low contrast against adjacent tissues, and significant topological variations. Traditional level set methods drive boundary evolution using gradient flows, often ignoring pointwise topological effects. Conversely, deep learning–based segmentation networks extract rich semantic features but frequently sacrifice structural details. To bridge this gap, we propose a novel model named TA-LSDiff, which combined topology--aware diffusion probabilistic model and level set energy, achieving segmentation without explicit geometric evolution. This energy function guides implicit curve evolution by integrating the input image and deep features through four complementary terms. To further enhance boundary precision, we introduce a pixel--adaptive refinement module that locally modulates the energy function using affinity weighting from neighboring evidence. Ablation studies systematically quantify the contribution of each proposed component. Evaluations on four public pancreas datasets demonstrate that TA-LSDiff achieves state-of-the-art accuracy, outperforming existing methods. These results establish TA-LSDiff as a practical and accurate solution for pancreas segmentation. 
\end{abstract}
\begin{IEEEkeywords}
Pancreas segmentation, Diffusion probabilistic model, Level set energy, Topological derivative, Pixel--adaptive refinement
\end{IEEEkeywords}
\section{Introduction}
\label{sec:introduction}
\IEEEPARstart{A}{ccurate} pancreas segmentation is a critical prerequisite in medical imaging, essential for tasks such as surgical planning and volumetric assessment of lesions. Despite its clinical significance, achieving precise segmentation is notoriously challenging \cite{wong2021implementation,sung2021global,khristenko2021preoperative}. The pancreas is characterized by its small size, low contrast, and high inter--patient variability in both morphology and topology \cite{zhang2024state}. Currently, clinicians manually delineate the pancreas on computed tomography (CT) scans, a workflow that is both time--consuming and prone to subjective influence. These limitations highlight the urgent clinical need for robust, automated segmentation methods that can deliver reliable results, thereby enhancing diagnostic accuracy and streamlining clinical workflows \cite{podinua2025artificial}.

To address these issues, researchers have explored both traditional and deep learning--based methods. Traditional image segmentation is dominated by model--driven approaches. Among these, variational level set methods \cite{osher1988fronts,adalsteinsson1995fast,kass1988snakes,mumford1989optimal} are highly popular due to their flexibility and mathematical rigor. These methods implicitly represent contours via a high--dimensional energy function, which is iteratively minimized using gradient descent. However, these traditional frameworks lack semantic information, leading to clear limitations when confronted with the complex anatomy of the pancreas. In addition, variational level set methods require manual parameter tuning, which is particularly challenging in abdominal CT imaging due to its high structural complexity.The pancreas varies significantly in shape and location across patients, meaning a fixed set of parameters often yields suboptimal results. This variability hinders the practical adoption of such methods in clinical applications.

With the rapid improvement of deep learning, data--driven methods such as convolutional neural networks (CNNs) have shown great success in medical image segmentation due to their strong ability to learn features from data \cite{zhu2005semi,roth2018deep}.
Given the pancreas's anatomical complexity, most studies leverage the U-Net \cite{ronneberger2015u} architecture and its variants. These models employ a symmetric encoder--decoder structure with skip connections to enhance detailed feature representation.
A primary research focus has been on refining the feature extraction power of this core architecture. For instance, Oktay et al. \cite{oktay2018attention} proposed Attention U-Net, which uses attention mechanisms to selectively emphasize pancreas--relevant regions. Other variants, such as MBU-Net \cite{huang2022semantic} or ResDAC-Net \cite{ji2024resdac}, incorporated lightweight backbones, dilated convolutions, or residual asymmetric kernels to improve both efficiency and feature representation. These architecture--centric improvements achieved solid baseline performance, reaching Dice scores in the 82--84\%.

However, simple feature extraction struggles with the pancreas's variable shape and low contrast. Therefore, research has focused on capturing broader contextual and spatial information. Models like ADAU-Net \cite{li2022attention} incorporated pyramid pooling modules to aggregate multi--scale features, while PBR-UNet \cite{li2021pancreas} proposed a hybrid regularization scheme to fuse inter--slice probability maps. More recently, approaches like CT-UNet \cite{chen2023ctunet} have combined 3D convolutions with Transformers to explicitly model volumetric context. These efforts, along with others employing advanced cross--domain connections \cite{li2020multiscale} or fuzzy skip logic \cite{chen2022target}, successfully pushed segmentation accuracy to nearly 88\% Dice.

However, purely data--based methods suffer from fundamental drawbacks: they require large--scale datasets with high labeling costs and often operate as "black-box". They learn complex patterns but operate without explicit geometric rules or topological constraints \cite{muhammad2024unveiling}. This lack of built--in structural knowledge can lead to results with subtle topological errors (e.g., disconnected components, incorrect holes) or rough, implausible boundaries. This "black-box" nature limits their practical use and acceptance in real clinical environments, underscoring the need for a new paradigm that integrates semantic learning with geometric and topological interpretability.

This limitation of purely data--driven models has naturally led researchers to explore hybrid frameworks that integrate the semantic power of deep learning method with the geometric topological nature of level set methods. These approaches generally fall into two categories. The first category uses deep learning as a powerful feature extractor to guide classic level set evolution. For instance, Level Set R-CNN \cite{homayounfar2020levelset} incorporates Chan--Vese (CV) evolution into Mask R-CNN. Similarly, BoxLevelSet \cite{li2022box,li2024box2mask} demonstrates that a network's bounding--box proposals can be refined by CV evolution to achieve segmentation from weak supervision. The second category attempts to make the level set function itself end-to-end differentiable. This is achieved either by training networks to learn the level set function directly \cite{hu2017deep} or by formulating the loss based on variational principles, such as the Mumford-Shah functional \cite{kim2019mumford}.

These hybrid methods have shown potential in pancreatic segmentation, with approaches combining 3D U-Net and level set evolution \cite{tian2023two,gou2025diffusion} or using geodesic distance priors \cite{hu2020automatic} pushing Dice scores to around 89\%. However, there is a key drawback: geometric evolution is typically treated as a computationally intensive post--processing step, relying on explicit partial differential equation (PDE) solvers \cite{tian2023two}, rather than being integrated into the network's inference process. Meanwhile, a powerful new generative paradigm has emerged for segmentation: denoising diffusion probabilistic model (DDPM) \cite{amit2021segdiff, wu2022medsegdiff}. The core strength of DDPM is their geometry--adaptive log--domain smoothing \cite{farghly2025diffusion}. Unlike data--domain smoothing, this approach naturally interpolates along the manifold's tangent directions rather than smearing probability mass off--manifold. As a result, intermediate predictions already inhabit a plausible shape manifold. However, relying solely on this implicit geometric bias is insufficient for sparse--data tasks like pancreas CT segmentation. Data sparsity leads to multiple possible interpolation manifolds, requiring explicit guidance to select the correct one. Therefore, a mild force can be applied without significant cost to guide the generation process toward the correct topology and a smoother boundary. The level set energy in TA-LSDiff serves this exact purpose: it provides the explicit geometric bias required to define the target manifold, guiding the model to select one with a reasonable topological structure and geometric smoothness.

In summary, deep learning excels at recognizing the overall object but struggles with fine details and lacks interpretability. Conversely, traditional variational level set methods are more interpretable and inherently robust to topological changes, but they often require manual adjustments and are computationally slow. This highlights a clear need for a new approach that combines the interpretability and topological guarantees of traditional methods with the semantic power of deep learning. Inspired by these methods, we propose TA-LSDiff, a topology--ware diffusion model. Our method operates by minimizing a level set energy, enabling topology comprehension while bypassing complex geometric calculations. This energy function comprises four key terms: a region term to distinguish the pancreas from its surroundings; a length term to maintain boundary smoothness; an area term to control the segmented size; and a distance penalty term to enforce precise localization and prevent background leakage. Furthermore, we introduce a pixel--adaptive refinement module that uses information from neighboring pixels to enhance the stability and precision of boundary decisions. This synergistic approach is significantly better at preserving fine structural details and the correct topology of the pancreas.

The main contributions are as follows:

\begin{enumerate}
    \item We establish a theoretical link between the CV $L_2$ gradient flow and the boundary topological derivative. Leveraging this insight, we propose TA-LSDiff, a topology--aware diffusion framework that markedly improves segmentation accuracy. 

    \item We design a level set energy functional that integrates four complementary terms to guide network optimization. This energy is directly injected into the reverse diffusion process to guide its optimization path and constrain the final segmentation.

    \item  We introduce a pixel--adaptive refinement (PAR) module that operates within the level set framework to locally modulate the energy. This module improves intra--regional consistency and enforces smooth boundaries, resulting in precise and coherent contours.
\end{enumerate}

The remaining parts of this article are organized as follows: Section II reviews preliminaries and Section III discusses our motivation. Section IV describes the TA-LSDiff method and its technical details. Section V presents the experimental setup, results, and analysis. Finally, Section VI concludes the paper.

\section{Preliminary}

\textbf{Notation.} Let $\Omega\subset\mathbb{R}^2$ be a bounded image domain and $I:\Omega\to\mathbb{R}$ represent a CT slice. The goal is to find a binary segmentation $y:\Omega\to\{0,1\}$ of the pancreas. In practice, this ideal binary mask is approximated by a relaxed mask $y\in[0,1]$. Note we will use $x \in \Omega$ to denote the spatial coordinates when calculating topological derivatives.

\subsection{Level Set Method and CV Model}
The level set method represents the region of interest by a level set function $\phi:\Omega\to\mathbb{R}$, where the boundary $\Gamma$ is defined as $\Gamma=\{y\in\Omega:\phi(y)=0\}$, the interior $\{\phi>0\}$ and the exterior $\{\phi<0\}$. To manipulate regions and their boundaries in a differentiable way, a smooth approximation method based on the Heaviside function $H$ and its derivative, the Dirac delta $\delta=H'$ can be used.
A popular choice is
\[
H(s)=\tfrac{1}{2}\Big(1+\tfrac{2}{\pi}\arctan(\tfrac{s}{\varepsilon})\Big),\quad
\delta(s)=\tfrac{1}{\pi}\tfrac{\varepsilon}{\varepsilon^2+s^2}.
\]
This defines the region indicators $\chi_1(\phi)=H(\phi)$ (inside) and
$\chi_2(\phi)=1-H(\phi)$ (outside). Consequently, the relaxed mask and level set are linked by $y\approx H(\phi)$.

The evolution of the level set function $\phi$ is governed by the following general form:
\begin{equation}
    \frac{\partial\phi}{\partial t} + F |\nabla\phi| = 0,
\end{equation}
where $F$ represents driving force function for evolution. This force function is typically determined by image features and segmentation task requirements.

The Chan-Vese (CV) model \cite{chan2001active} represents a significant milestone in the development of level set methods. As a region--based active contour, its core advantage stems from the fundamental assumption of approximate intensity homogeneity within the foreground and background regions. The CV functional is defined as:
\begin{equation}
\label{eq:cv-energy}
\mathcal{E}_{\mathrm{CV}}(\Omega_1,\Omega_2;c_1,c_2)
=
\int_{\Omega_1} (f(y)-c_1)^2\,\mathrm{d}y
+\int_{\Omega_2} (f(y)-c_2)^2\,\mathrm{d}y.
\end{equation}
Its corresponding level set form is:
\begin{equation}
\begin{split}
\label{eq:cv-phi-energy}
\mathcal{E}_{\rm CV}(\phi;c_1,c_2)
&=\int_\Omega (f-c_1)^2\,H(\phi)\,\mathrm{d}y \\
&+\int_\Omega (f-c_2)^2\bigl(1-H(\phi)\bigr)\,\mathrm{d}y.
\end{split}
\end{equation}
For a fixed $\phi$, the optimal region means, $c_1$ and $c_2$, have closed--form solutions: 
\begin{equation}
\label{eq:c1c2}
c_1(\phi)=\frac{\int_\Omega f\,H(\phi)\,\mathrm{d}y}{\int_\Omega H(\phi)\,\mathrm{d}y},\qquad
c_2(\phi)=\frac{\int_\Omega f\,[1-H(\phi)]\,\mathrm{d}y}{\int_\Omega [1-H(\phi)]\,\mathrm{d}y}.
\end{equation}
where \(c_1\) and \(c_2\) are the mean intensities inside (\(\phi>0\)) and outside (\(\phi<0\)) the contour, respectively. The functional is performed via a $L_2$ gradient flow, \(\partial_{t}\phi=-\,\delta \mathcal{E}_{\mathrm{CV}}/\delta\phi\), which yields the following PDE:
\begin{equation}
\label{eq:cv-flow}
\partial_{t}\phi
=
\delta(\phi)\Big[
-(f(x)-c_1)^2
+(f(x)-c_2)^2
\Big].
\end{equation}
This flow deforms the boundary \(\Gamma\) along the steepest--descent direction.

To control the smoothness of the contour, a length term $\int_{\Omega}\big|\nabla H(\phi)\big|\,\mathrm{d}x$ can be added; Similarly, a distance--regularized term, $\frac{1}{2}\!\int_\Omega (|\nabla\phi|-1)^2\,\mathrm{d}x$, is often employed for numerical stability and to avoid frequent reinitialization to a signed--distance function. Its variation contributes a diffusion--like operator to improve time--step stability. The level set method allows for the incorporation of different functional terms into the evolution equation based on task requirements and prior knowledge (e.g., shape, position, or constraints).

\subsection{Topological Derivative in Level set Segmentation}
\label{subsec:td}

The topological derivative (TD) of a shape functional $F(\Omega)$ at $x\in\Omega$ quantifies the functional's sensitivity to the nucleation of a small inclusion \cite{he2007incorporating,he2007solving}:
\begin{equation}
\label{topo}
d_TF(\Omega)(x) \;:=\;
\lim_{\rho\to 0}\frac{F(\Omega\setminus B_{\rho,x})-F(\Omega)}{|B_{\rho,x}\cap\Omega|}.
\end{equation}
A negative $d_TF<0$ indicates that creating a small hole/inclusion near $x$ decreases the energy, which in turn suggests that a topology change (e.g., splitting, merging, or hole creation) is favorable. Unlike pure boundary evolution, TD provides a nonzero driving field even away from the boundary $\Gamma$, thus reducing dependence on initialization and enabling early global topology exploration. For region--driven functionals such as \eqref{eq:cv-energy}, the boundary variation and the TD agree in sign along $\Gamma$. This ensures consistency between local interface motion and global topology updates.
\subsection{DDPM Framework for Segmentstion}
We employ a denoising diffusion probabilistic model \cite{ho2020denoising,amit2021segdiff,wu2022medsegdiff,rahman2023ambiguous} to impose a data--driven prior on relaxed masks $y\in[0,1]^\Omega$. The forward process is a fixed Markov chain that gradually adds Gaussian noise to a clean segmentation mask \(y_0\) over \(T\) steps, following a variance schedule \(\{\beta_t \in (0,1)\}_{t=1}^T\). 
\begin{equation}
\label{eq:forward-diff}
p(y_t\mid y_0)=\mathcal{N}\!\big(y_t;\,\sqrt{\bar{\alpha}_t}\,y_0,\,(1-\bar{\alpha}_t) I\big).
\end{equation}
Let \(\alpha_t = 1-\beta_t\) and \(\bar{\alpha}_t = \prod_{s=1}^t \alpha_s\).

The reverse process aims to learn the transition \(p_\theta(y_{t-1}|y_t)\) to gradually remove the noise. 
\begin{equation}
y_{t-1}=\frac{1}{\sqrt{\alpha_t}}\left(y_t-\frac{1-\alpha_t}{\sqrt{1-\bar{\alpha}_t}} \varepsilon_\theta\left(y_t,I, t\right)\right)+\sigma_t\xi_t, \quad \xi_t \sim \mathcal{N}(0, I).
\end{equation}

The core insight of DDPM is that this can be achieved by training a network, \(\epsilon_\theta(y_t, I, t)\), to predict the noise component \(\epsilon\) from the noisy input \(y_t\). Sampling starts from pure noise \(y_T \sim \mathcal{N}(0, I)\) and iteratively denoises the sample using the learned network.
Among them, $\epsilon_\theta$ represents the noise distribution learned by the network, and its objective function is the noise added during the forward process:
\begin{equation}
\label{ddpm-target}
\theta^* = \operatorname*{argmin}_\theta \mathbb{E}_{t,y_0,\epsilon} \left[ w_t \left\| \epsilon - \epsilon_\theta(y_t, I, t) \right\|_2^2 \right].
\end{equation}
Here, the optimal $\epsilon_\theta(y_t, I, t)$ is almost exactly the same as $\epsilon$ in all cases. The objective in \eqref{ddpm-target} can also be restated as an equivalent score--matching objective \cite{song2020score}:
\begin{equation}
\begin{split}
\theta^* = \operatorname*{argmin}_\theta
    \sum_{t=1}^{N} \sigma_t^2 & \mathbb{E}_{p_{\text{data}}} \mathbb{E}_{p(y_t \mid y_0)} \left[ \left\| s_\theta(y_t, I, t) \right. \right. \\
    & \left. \left. - \nabla_{y_t} \log p(y_t \mid y_0) \right\|_2^2 \right]. \label{eqn:ddpm_obj}
\end{split}
\end{equation}
Here, $p_{data}$ represents the data distribution, and the optimal $s_\theta(y_t, I, t)$ almost perfectly matches $\nabla_{y_t} \log p(y_t\mid y_0)$ in all cases.

\section{Motivation}
Our core motivation is to establish a general framework that unifies the data--driven priors of diffusion models with the geometric of variational level set methods. Our solution is to translate the model--driven principles into a language that the data--driven model can understand: gradients.

\subsection{The Chan-Vese Model as a Canonical Example}
We show a sign--consistency (hence a proportionality up to a positive factor) between the $L_2$ gradient-flow speed of the CV model and a boundary topological driving term.

\begin{proposition}
\textbf{(Boundary consistency of CV descent and topological drive.)}
\label{prop:cv_td_equivalence}
Consider the Chan--Vese energy \eqref{eq:cv-energy} with its topological derivative:
\begin{equation}
\label{topo-cv}
    \mathcal{T}_{\rm CV}(x)=-(f(x)-c_1)^2+(f(x)-c_2)^2.
\end{equation}
Then on the zero level set $\Gamma=\{\phi=0\}$ the normal velocity induced by $L_2$ gradient descent satisfies
\begin{equation}
\big\langle -\tfrac{\delta\mathcal{E}_{\rm CV}}{\delta\phi},\,n\big\rangle\Big|_{\Gamma}
\;=\; \frac{\delta_\varepsilon(0)}{|\nabla\phi|}\,\mathcal{T}_{\rm CV}(x),
\end{equation}
hence it has the same sign as $\mathcal{T}_{\rm CV}(x)$.
Moreover, the boundary topological derivative (computed by nucleating an infinitesimal inclusion at $x\in\Gamma$) has a first--order variation proportional to $\mathcal{T}_{\rm CV}(x)$, and thus also shares its sign.
\end{proposition}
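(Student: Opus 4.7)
The plan is to establish the two claims in turn. First, show that the $L_2$ gradient descent of $\mathcal{E}_{\rm CV}$ induces a normal velocity on $\Gamma$ that equals $\mathcal{T}_{\rm CV}(x)$ up to a strictly positive factor. Second, show that the topological derivative of the CV functional at a boundary point has a first--order asymptotic expansion proportional to the same quantity $\mathcal{T}_{\rm CV}(x)$. The common algebraic kernel $-(f-c_1)^2+(f-c_2)^2$ ties both computations together.

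For the first claim, I would start from the level--set form \eqref{eq:cv-phi-energy} and compute $\delta\mathcal{E}_{\rm CV}/\delta\phi$. Since $c_1,c_2$ are the optimal region means given by \eqref{eq:c1c2}, an envelope--theorem argument (the partial derivatives $\partial_{c_i}\mathcal{E}_{\rm CV}$ vanish at the optimum) lets me treat them as effective constants when differentiating with respect to $\phi$. Using $H'=\delta_\varepsilon$ this yields
\[
\frac{\delta\mathcal{E}_{\rm CV}}{\delta\phi}=\delta_\varepsilon(\phi)\bigl[(f-c_1)^2-(f-c_2)^2\bigr]=-\delta_\varepsilon(\phi)\,\mathcal{T}_{\rm CV}(x).
\]
To convert the $L_2$ flow $\partial_t\phi=-\delta\mathcal{E}_{\rm CV}/\delta\phi$ into boundary motion, differentiating $\phi(X(t),t)=0$ along a boundary trajectory $X(t)\in\Gamma$ gives $\nabla\phi\cdot\dot X+\partial_t\phi=0$, so the normal speed is $V_n=\partial_t\phi/|\nabla\phi|$. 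Evaluating at $\phi=0$ and substituting the expression above delivers the stated identity, with sign consistency following from $\delta_\varepsilon(0)>0$ and $|\nabla\phi|>0$ on a regular level set.

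For the second claim, I would plug $F=\mathcal{E}_{\rm CV}$ and $\Omega=\Omega_1$ into \eqref{topo}. Nucleating an infinitesimal inclusion $B_{\rho,x}$ relabels $B_{\rho,x}\cap\Omega_1$ from inside to outside, so only the integrand on this set changes: it switches from $(f-c_1)^2$ to $(f-c_2)^2$. By continuity of $f$ at $x$ and a first--order Taylor expansion,
\[
F(\Omega_1\setminus B_{\rho,x})-F(\Omega_1)=\int_{B_{\rho,x}\cap\Omega_1}\bigl[(f-c_2)^2-(f-c_1)^2\bigr]\mathrm{d}y=|B_{\rho,x}\cap\Omega_1|\,\mathcal{T}_{\rm CV}(x)+o(\rho^2).
\]
Dividing by $|B_{\rho,x}\cap\Omega_1|$ and letting $\rho\to 0$ yields $\mathcal{T}_{\rm CV}(x)$, so the topological derivative is a positive multiple of $\mathcal{T}_{\rm CV}(x)$; in particular the two share the same sign as the boundary velocity derived above.

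The most delicate step is the asymptotic analysis for $x\in\Gamma$: here $|B_{\rho,x}\cap\Omega_1|\sim\tfrac{1}{2}|B_{\rho,x}|$ and the boundary cuts the small ball in a curved manner, so I would need uniform continuity of $f$ near $x$ together with a mild regularity assumption on $\Gamma$ (e.g.\ $C^1$) to confirm that the remainder is genuinely $o(\rho^2)$ and that the leading constant is the one quoted. The envelope--theorem step in the first claim is a secondary pitfall because $c_1,c_2$ depend on $\phi$ nonlocally through \eqref{eq:c1c2}, but the vanishing of $\partial_{c_i}\mathcal{E}_{\rm CV}$ at the optimum cleanly dispatches it.
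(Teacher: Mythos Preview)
Your proposal is correct and follows essentially the same two--step route as the paper: compute the first variation $-\delta\mathcal{E}_{\rm CV}/\delta\phi=\delta_\varepsilon(\phi)\,\mathcal{T}_{\rm CV}$ and convert it to a normal speed via the level--set kinematics, then evaluate the energy change under an infinitesimal relabeling $B_{\rho,x}$ and pass to the limit. Your envelope--theorem justification for freezing $c_1,c_2$ and your remark on the half--ball geometry at $x\in\Gamma$ are in fact more explicit than the paper's treatment, which simply notes that the perturbation of $c_1,c_2$ contributes $o(|B_{\rho,x}|)$ and defers the detailed expansion to an appendix.
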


\begin{proof}
(1) \emph{\textbf{Boundary topological derivative.}}
Nucleating/removing an infinitesimal ball $B_{\rho,x}$ centered at $x_0\in\Gamma$ swaps its label (inside/outside) to first order. The energy change is
\begin{equation}
\begin{split}
\Delta \mathcal{E}_{\rm CV} = \left[-(f(x_0)-c_1)^2 + (f(x_0)-c_2)^2\right]|B_{\rho,x}| + o(|B_{\rho,x}|).
\end{split}
\end{equation}
If $c_1,c_2$ are recomputed after the perturbation, their changes are $O(|B_{\rho,x}|)$, contributing only $o(|B_{\rho,x}|)$ to the first--order term. According to the definition of the topological derivative \eqref{topo}, we can easily obtain \eqref{topo-cv} $\mathcal{T}_{\rm CV}(x)$. The detailed derivation is provided in Appendix~\ref{app:cv_proof}.

(2) \emph{\textbf{Variational gradient on the boundary.}}
Taking the topological derivative in $\phi$ yields
\begin{equation}
-\frac{\delta\mathcal{E}_{\rm CV}}{\delta\phi}
=\delta_\varepsilon(\phi)\,\big[-(f-c_1)^2+(f-c_2)^2\big]
=\delta_\varepsilon(\phi)\,\mathcal{T}_{\rm CV}(x).
\end{equation}
The $L_2$ gradient flow $\partial_\tau\phi=-\delta\mathcal{E}/\delta\phi$ and the level set $V_n=-\partial_\tau\phi/|\nabla\phi|$ give, on $\Gamma$,
\begin{equation}
V_n(x)=\frac{\delta_\varepsilon(0)}{|\nabla\phi(x)|}\,\mathcal{T}_{\rm CV}(x),
\end{equation}
where $\delta_\varepsilon(0)>0$. Hence $V_n(x)$ is positively proportional to $\mathcal{T}_{\rm CV}$ and has the same sign.
\end{proof}

\paragraph{Remarks.}
(i) If a perimeter term $\mu\!\int\delta_\varepsilon(\phi)|\nabla\phi|\,\mathrm{d}x$ (or a balloon term $\nu\!\int H(\phi)\mathrm{d}x$) is included, the boundary flow adds $\mu\,\kappa+\nu$,
\[
V_n \;\propto\; \underbrace{\mathcal{T}_{\rm CV}(x)}_{\text{region drive}} 
\;+\; \underbrace{\mu\,\kappa + \nu}_{\text{geometric}},
\]
and the corresponding boundary drive should be augmented accordingly.  

(ii) This sign--consistency shows that local steepest descent on $\Gamma$ agrees with the global energetic preference of an infinitesimal topology change at $\Gamma$, providing a clean motivation to fuse boundary evolution with topological cues in our method.

\subsection{DDPM's Log-Domain Smoothing} 

We adopt the DDPM framework because its learning target is the score function \(s_\theta(y,t)\approx\nabla\log p_t(y)\)~\cite{hyvarinen2005estimation}. Crucially, and any translation--invariant linear smoothing of the score is exactly equivalent to smoothing the log--density itself:
\begin{equation}
  (k \ast s_t)(y)
  \;=\; (k \ast \nabla \log p_t)(y)
  \;=\; \nabla\!\big(k \ast \log p_t\big)(y),
\end{equation}
where \(\ast\) denotes convolution. Thus DDPMs effectively perform log--domain smoothing rather than density--domain smoothing. Recent theory shows that such log-domain smoothing is geometry--adaptive~\cite{farghly2025diffusion}.
In the small--noise regime one can approximate:
\begin{equation}
  \log p_t(y)
  \;\approx\;
  \log p_{\mathcal M}\!\big(\pi_{\mathcal M}y\big)
  \;-\;
  \frac{1}{2\,\sigma(t)^2}\,\mathrm{dist}\!\big(y,\mathcal M\big)^2
  \;+\; \mathrm{const}.
\end{equation}
This means smoothing predominantly propagates along the tangent directions of the data manifold $\mathcal M$, while sharply penalizing normal excursions. As a result, intermediate predictions (e.g., masks \(y_t\)) naturally remain near a plausible shape manifold. This allows TA-LSDiff to impose only mild topological and boundary regularization to steer the samples toward the correct topology.

\begin{figure*}
  \centering
  \hspace*{-0.3cm}\includegraphics[width=7.2in, trim=-0.4in 0 0 0, clip]{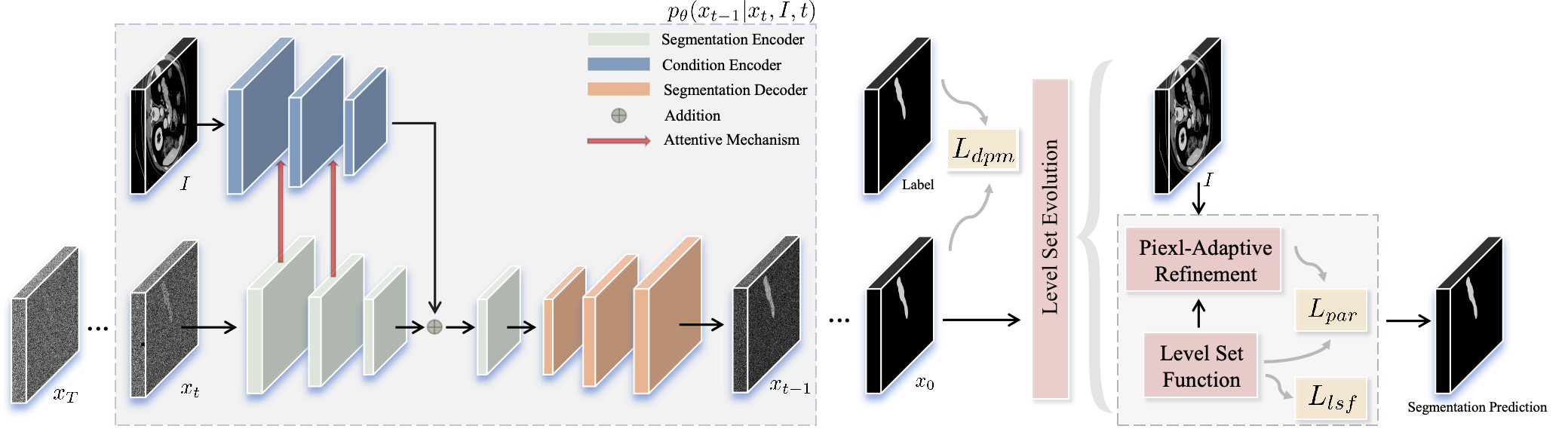}
  \caption{The blue box represents the processing procedure of the diffusion probability model, and we use the ResNet encoder and U-Net decoder to achieve this process. The encoder consists of a group of conditional encoders and a segmentation encoder with an attention mechanism on the feature fusion path. The green box represents the processing procedure of the level set energy and pixel adaptive module.}
  \label{network}
\end{figure*}

\subsection{A General Framework for Energy--Guided Diffusion}
\label{subsec:energy-motivation}

We re--interpret the level set energy \(\mathcal{E}_{\mathrm{var}}\), as a variational prior within a Bayesian context. We formalize this by defining a posterior probability distribution for a segmentation \(y
_t\) as proportional to its exponentiated negative energy:
\begin{equation}
\label{eq:posterior-var}
p(y_t\mid I)\;\propto\;\exp\!\big(-\mathcal{E}_{\mathrm{var}}(\phi(y_t);I)\big).
\end{equation}
This probabilistic view provides a profound insight and forms the theoretical bridge connecting the two paradigms. The score function $\nabla_{y_t}\log p(y_t|I)$, which points in the direction of steepest probability ascent, is precisely the negative variational gradient of the energy $\mathcal{E}_{var}$, which points in the direction of steepest energy descent:
\begin{equation}
\label{eq:score-energy-identity}
\nabla_{y_t} \log p(y_t\mid I) \;\propto\; -\,\nabla_{y_t} \mathcal{E}_{\mathrm{var}}(\phi(y_t);I).
\end{equation}
This identity enables us to directly translate geometric and topological properties into guiding signals for the diffusion process.

\subsection{Practical Implications for Our Design}
\label{subsec:practical}
Classic variational energies, while theoretically coherent, often lack the semantic expressiveness to avoid suboptimal local minima in noisy, low--contrast CT volumes. Furthermore, their gradient flow (shape derivative) is typically confined to the object's boundary $\Gamma$ \cite{chan2001active}.

Conversely, Diffusion Probabilistic Models learn a powerful semantic prior, providing a point--wise score $s_{\theta}(y_t, I, t)$ across the entire domain $\Omega$.
However, they lack explicit geometric constraints, which can lead to subtle topological errors or rough boundaries \cite{gou2025diffusion}.

Our hybrid scheme reframes this segmentation task as a functional gradient descent process, synergizing the strengths of both paradigms.
This process is guided simultaneously by two distinct functional gradients:
\begin{enumerate}
\item \textbf{A Data--Driven Gradient}: The Diffusion Probabilistic Model's learned score function $s_{\theta}(y_t, I, t)$, which approximates the data log--likelihood gradient $\nabla \log p(y_t \mid I)$.
    
\item \textbf{A Geometric--Constraint Gradient}: Inspired by the Topological Derivative, we leverage its key property: the TD defines a scalar field over the entire domain $\Omega$, quantifying the global energy's sensitivity to a point--wise topological perturbation ('flipping' a pixel) at any location $x$. This all--domain, point--wise mechanism is naturally aligned with the DDPM's score function. Based on this alignment, we design a topology--aware level set energy, $L_{lsf}$ (detailed in Sec.\eqref{sec:learnable-energy}). Following \eqref{eq:score-energy-identity}, its negative gradient, $-\nabla L_{lsf}$, serves as the geometric gradient derived from this topology--aware prior.
\end{enumerate}

During inference, this synergy is realized as an energy-guided sampling process. We combine these two functional gradients to define a regularized score, $\hat{s}_{\theta}$, which guides the segmentation process:
\begin{equation}
\hat{s}_{\theta}(y_t, I, t) = \underbrace{s_{\theta}(y_t, I, t)}_{\text{Data Gradient}} - \underbrace{\gamma_{st} \nabla_{y_t} L_{lsf}(\phi(y_t); I)}_{\text{Geometric Gradient}}.
\label{eq:score_guidance}
\end{equation}
Equation~\eqref{eq:score_guidance} formulates the guidance in the score space.
Leveraging the relationship between the score and noise predictions ($s_\theta = -\epsilon_\theta / \sqrt{1-\overline{\alpha}_t}$), this guidance can be equivalently expressed in the noise prediction space as:
\begin{equation}
\hat{\epsilon}_{\theta}(y_t, I, t) = \underbrace{\epsilon_{\theta}(y_t, I, t)}_{\text{Data Prediction}} + \underbrace{\gamma_{\epsilon t} \nabla_{y_t} L_{lsf}(\phi(y_t); I)}_{\text{Geometric Gradient}},
\label{eq:noise_guidance}
\end{equation}
where $\gamma_{st}$ and $\gamma_{\epsilon t}$ are guidance scales. 

Our approach integrates model--driven concepts directly into the data--driven framework, achieving robust and fine--grained boundary delineation while bypassing the need for computationally expensive PDE solvers.

This constitutes the core motivation for the TA-LSDiff design.

\section{TA-LSDiff} 
In this section, we propose TA-LSDiff, a novel topology--aware diffusion probabilistic model. As illustrated in Fig. \ref{network}, our approach introduces a level set energy function to guide network learning and a pixel--adaptive refinement module for local refinement. This combination aims to achieve topologically consistent and detail--preserving pancreatic segmentation.

\subsection{Level Set Energy for Guiding the Diffusion Probabilistic Model}
\label{sec:learnable-energy}
Recent research has focused on integrating diverse information sources within variational frameworks for image segmentation. In this context, our proposed energy functional integrates four complementary terms:

\begin{equation}
\begin{split}
\label{total-variational}
L_{lsf}=\mathcal{E}(\phi;I)
&= \lambda_1\,\underbrace{\mathcal{E}_{\mathrm{Region}}(\phi;I)}_{\text{regional statistics}} + \lambda_2\,\underbrace{\mathcal{E}_{\mathrm{Length}}(\phi)}_{\text{smoothness}} \\
&\quad+\lambda_3\,\underbrace{\mathcal{E}_{\mathrm{Area}}(\phi)}_{\text{size prior}} + \lambda_4\,\underbrace{\mathcal{E}_{\mathrm{Distance}}(\phi;I)}_{\text{localization}},
\end{split}
\end{equation}
where $\lambda_1,\lambda_2,\lambda_3,\lambda_4>0$ are the weight of different terms.

\subsubsection{Region Term}
We employ the Gaussian model to represent the conditional density of each region, in order to describe visual consistency and distinguishability. The negative log--likelihood function is defined as:
\begin{equation}
\label{Gauss-region}
\begin{aligned}
\mathcal{E}_{\mathrm{Region}}&(\Omega_1,\Omega_2; I)= \int_{\Omega_1} e_1(y)\,\mathrm{d}y
          + \int_{\Omega_2} e_2(y)\,\mathrm{d}y,\\
e_i(y) &= \log\lvert\Sigma_i\rvert
        + \big(I(y)-\mu_i\big)^{\top}\Sigma_i^{-1}\big(I(y)-\mu_i\big).
\end{aligned}
\end{equation}
The level set form of \eqref{Gauss-region} is
\begin{equation}
\label{eq:E-region}
\mathcal{E}_{\mathrm{Region}}(\phi;I)=
\int_{\Omega} \Big[e_1(y)\,H(\phi)+e_2(y)\,\big(1-H(\phi)\big)\Big]\,\mathrm{d}y.
\end{equation}
The Euler–Lagrange equations for \(\mu_i,\Sigma_i\) have closed-form solutions conditioned on \(\phi\):
\begin{figure}
  \centering
  \hspace*{-0.8cm}\includegraphics[width=3.8in, trim=-0.4in 0 0 0, clip]{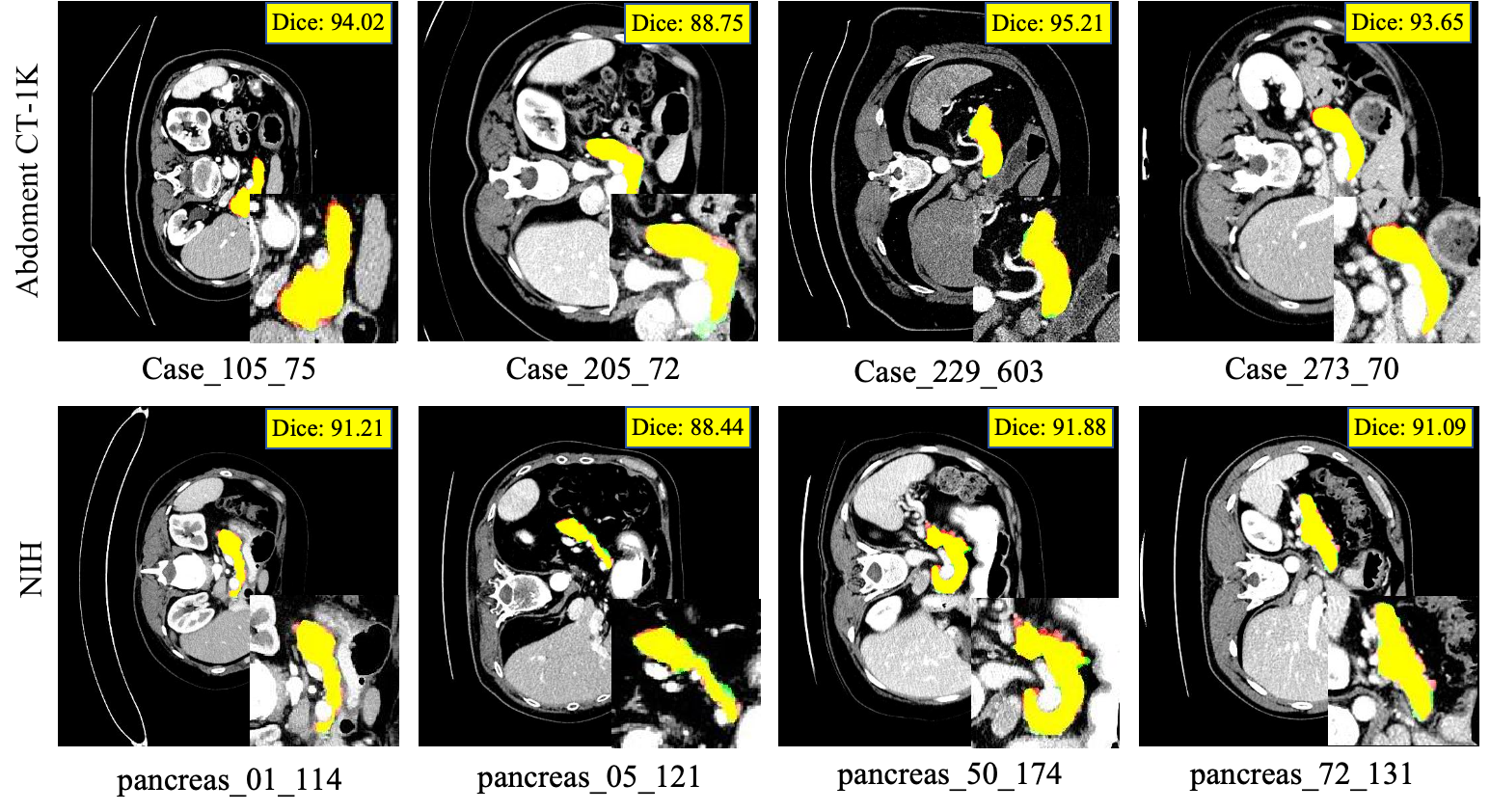}
  \caption{Segmentation results on the AbdomenCT-1K dataset (first row) and the NIH dataset (second row). Contours: Red = Gold Standard, Green = Predicted Result.}
  \label{abknih}
\end{figure}
\begin{figure}
  \centering
\hspace*{-0.8cm}\includegraphics[width=3.8in, trim=-0.4in 0 0 0, clip]{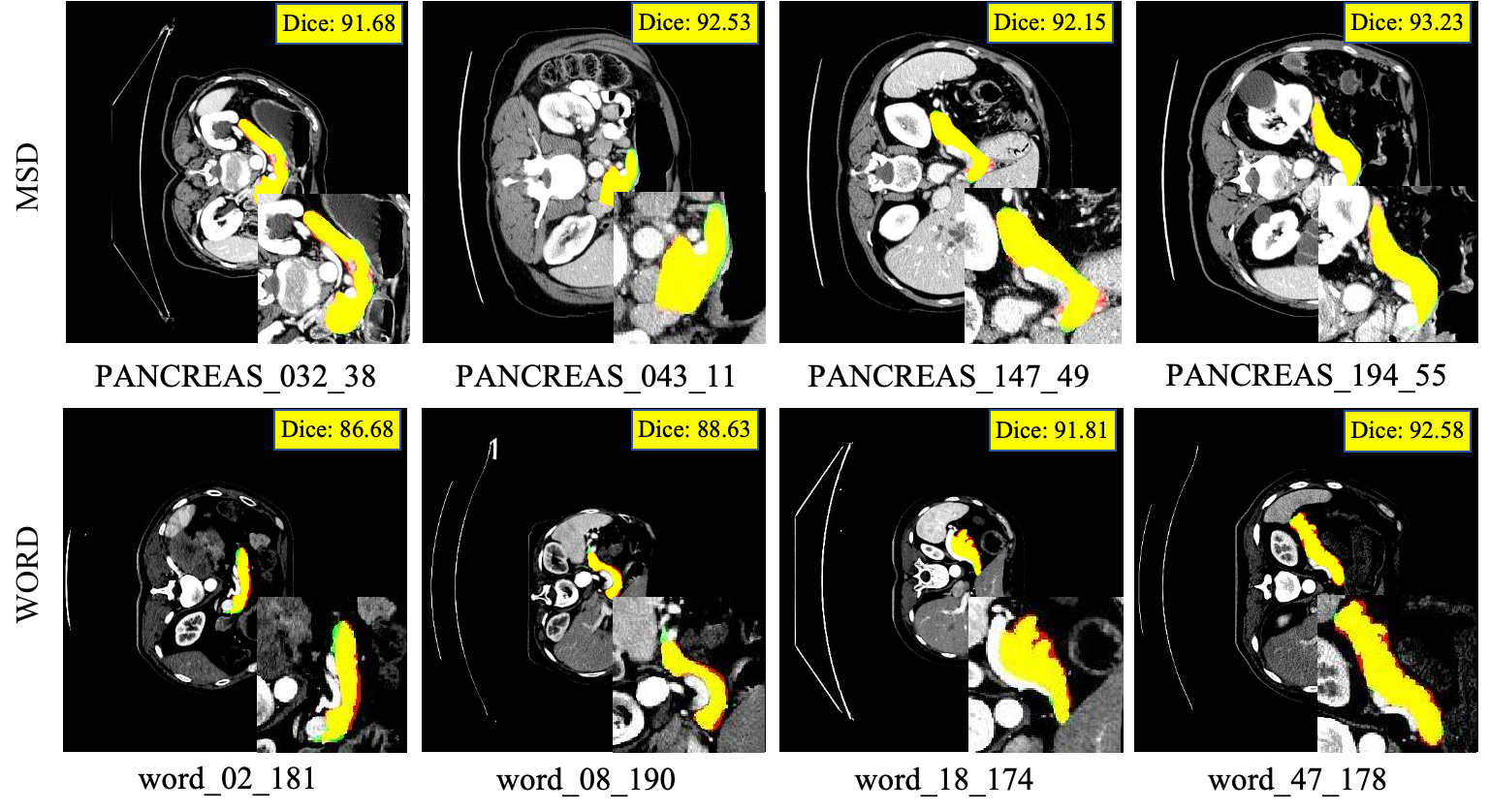}
  \caption{Segmentation results on the MSD dataset (first row) and the WORD dataset (second row). Contours: Red = Gold Standard, Green = Predicted Result.}
  \label{msdword}
\end{figure}
\begin{equation}
\begin{aligned}
\mu_i(\phi) &=\frac{\int_{\Omega} I\,\chi_i(\phi)\,\mathrm{d}y}{\int_{\Omega} \chi_i(\phi)\,\mathrm{d}y},\\
\Sigma_i(\phi) &=\frac{\int_{\Omega} (I-\mu_i)(I-\mu_i)^\top\,\chi_i(\phi)\,\mathrm{d}y}{\int_{\Omega} \chi_i(\phi)\,\mathrm{d}y}.
\end{aligned}
\end{equation}
This proposition extends to our generalized region term \(\mathcal{E}_{\mathrm{Region}}\), which is based on regional Gaussian statistics.

\begin{proposition}
\label{prop:region_drive}
\textbf{(Boundary consistency of two--phase regional energy descent and topological drive.)}
The boundary drive of the generalized region term \(\mathcal{E}_{\mathrm{Region}}\) is proportional to the difference of the negative log--likelihoods at the boundary:
\begin{equation}
\label{eq:region-boundary-drive}
\Big\langle -\tfrac{\delta \mathcal{E}_{\mathrm{Region}}}{\delta \phi},\,n\Big\rangle\Big|_{\Gamma}
\;\propto\; -e_1(x) + e_2(x).
\end{equation}
\end{proposition}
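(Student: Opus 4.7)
The plan is to mirror the argument of Proposition~\ref{prop:cv_td_equivalence} verbatim, with the only substantive substitution being $(I-c_i)^2 \mapsto e_i(y)$ in the integrand. Since $\mathcal{E}_{\mathrm{Region}}$ has the same two-phase structure $\int_\Omega[e_1 H(\phi)+e_2(1-H(\phi))]\mathrm{d}y$ as the CV energy, the pipeline from first variation to boundary normal velocity transfers directly; the only new ingredient is handling the dependence of $(\mu_i,\Sigma_i)$ on $\phi$.

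First, I would compute the first variation $\delta\mathcal{E}_{\mathrm{Region}}/\delta\phi$ formally, treating $\mu_i(\phi)$ and $\Sigma_i(\phi)$ as $\phi$-dependent parameters. Differentiation through the smoothed Heaviside yields the explicit contribution $\delta_\varepsilon(\phi)[e_1(y)-e_2(y)]$, plus implicit contributions of the form $\int_\Omega \partial_{\mu_i}e_i\,\delta\mu_i + \partial_{\Sigma_i}e_i\,\delta\Sigma_i$ weighted by $\chi_i(\phi)$. Here I would invoke the envelope theorem: because the closed-form $\mu_i(\phi),\Sigma_i(\phi)$ given in the text are precisely the Euler--Lagrange minimizers for fixed $\phi$, the parametric optimality conditions $\int_\Omega \partial_{\mu_i}e_i\,\chi_i\,\mathrm{d}y=0$ and $\int_\Omega \partial_{\Sigma_i}e_i\,\chi_i\,\mathrm{d}y=0$ hold, so the implicit terms vanish to first order. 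Hence
\begin{equation*}
\frac{\delta \mathcal{E}_{\mathrm{Region}}}{\delta \phi} \;=\; \delta_\varepsilon(\phi)\,\bigl[e_1(y)-e_2(y)\bigr].
\end{equation*}

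Second, restricting to $\Gamma=\{\phi=0\}$ and converting the $L_2$ gradient flow $\partial_\tau\phi=-\delta\mathcal{E}/\delta\phi$ into normal velocity via $V_n=-\partial_\tau\phi/|\nabla\phi|$ — exactly as in step~(2) of the proof of Proposition~\ref{prop:cv_td_equivalence} — gives $V_n(x)=(\delta_\varepsilon(0)/|\nabla\phi(x)|)[-e_1(x)+e_2(x)]$ on $\Gamma$. Since $\delta_\varepsilon(0)/|\nabla\phi|>0$, this is the claimed proportionality \eqref{eq:region-boundary-drive}. A parallel topological-derivative calculation (nucleating an infinitesimal inclusion at $x_0\in\Gamma$ and swapping its label) would produce $\mathcal{T}_{\mathrm{Region}}(x)=-e_1(x)+e_2(x)$, again with the recomputation of $(\mu_i,\Sigma_i)$ contributing only $o(|B_{\rho,x}|)$ by the same envelope argument, confirming sign consistency with the boundary drive.

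The main obstacle I anticipate is rigorously justifying the envelope step. Strictly, one needs the minimizers $\mu_i(\phi),\Sigma_i(\phi)$ to be Fréchet-differentiable in $\phi$ with bounded variations $\delta\mu_i,\delta\Sigma_i$. Both properties follow from the smooth ratio-of-integrals structure of the closed forms, provided the nondegeneracy conditions $\int_\Omega \chi_i(\phi)\,\mathrm{d}y>0$ and $\Sigma_i(\phi)\succ 0$ hold — a mild assumption on the image and on the location of the zero level set. Away from this degeneracy, the derivation is routine; the rest of the argument is a direct transcription of Proposition~\ref{prop:cv_td_equivalence}.
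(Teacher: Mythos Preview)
Your proposal is correct but follows a genuinely different route from the paper. You establish the boundary drive \eqref{eq:region-boundary-drive} directly by computing $\delta\mathcal{E}_{\mathrm{Region}}/\delta\phi$ and invoking the envelope theorem: since $(\mu_i,\Sigma_i)$ are the minimizers of $\mathcal{E}_{\mathrm{Region}}$ for fixed $\phi$, the first-order conditions $\int_\Omega \partial_{\mu_i}e_i\,\chi_i\,\mathrm{d}y=0$ and $\int_\Omega \partial_{\Sigma_i}e_i\,\chi_i\,\mathrm{d}y=0$ annihilate all chain-rule contributions, leaving only the explicit term $\delta_\varepsilon(\phi)[e_1-e_2]$. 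The paper's appendix, by contrast, attacks the \emph{topological-derivative} side and does so by explicit computation: it specializes to the scalar Gaussian case $e_i=\log\sigma_i^2+(f-c_i)^2/\sigma_i^2$, exploits the closed-form identity $F_i(\Omega_i)=|\Omega_i|(\log\sigma_i^2+1)$, Taylor-expands the perturbed statistics $c_i^\rho$ and $(\sigma_i^\rho)^2$ to first order in $|B_{\rho,x}|$, and reaches $\mathcal{T}(x_0)=-e_1(x_0)+e_2(x_0)$ by direct algebra; the variational-gradient side is left implicit as a transcription of Proposition~\ref{prop:cv_td_equivalence}, step~(2). Your envelope argument is shorter, handles the multivariate $\Sigma_i$ case stated in the body without modification, and makes the structural reason transparent; the paper's computation is more elementary in that it invokes no optimization principle, and it produces the topological derivative explicitly, which is the quantity the paper's broader motivation rests on. The two are consistent: the paper's finding that the $O(|B_{\rho,x}|)$ corrections to $(c_i,\sigma_i^2)$ contribute only at second order is exactly your envelope step, unpacked.
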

This expression can be interpreted as a two--phase generalization of the classic CV model's contrast term. As proven in Appendix~\ref{app:region_proof}, this boundary drive is also consistent with the direction given by the term's topological derivative.

\subsubsection{Length Term}
To suppress boundary oscillations and enforce smoothness, we use the perimeter surrogate:
\begin{equation}
\mathcal{E}_{\mathrm{Length}}(\Gamma) = \mathrm{Length}(\Gamma) = \int_{\Gamma} \mathrm{d}s.
\end{equation}
Here, $\mathrm{d}s$ represents the arc length element along the boundary. This boundary integral is transformed into an integral over the entire domain $\Omega$ using its level set functional form:
\begin{equation}
\label{eq:E-length}
\mathcal{E}_{\mathrm{Length}}(\phi)=\int_{\Omega}\big|\nabla H(\phi)\big|\,\mathrm{d}y.
\end{equation}

\subsubsection{Area Term}
The pancreas occupies a small volume; we thus penalize deviations from target areas to stabilize scale:
\begin{equation}
\label{area}
\mathcal{E}_{\mathrm{Area}}(\Omega_1, \Omega_2) = (\int_{\Omega_1}\mathrm{d}y - A_1)^2 + (\int_{\Omega_2}\mathrm{d}y - A_2)^2, 
\end{equation}
where \(A_1\) is the target area, \(A_2=\Omega \setminus A_1\). $\int_{\Omega_i}\mathrm{d}y$ represents the areas of the foreground and the background respectively. The corresponding level set form, derived using $H(\phi)$, is: 
\begin{equation}
\label{eq:E-area}
\mathcal{E}_{\mathrm{Area}}(\phi)
=\Big(\!\int_{\Omega} H(\phi)\,\mathrm{d}y-A_1\Big)^2
+\Big(\!\int_{\Omega} \big(1-H(\phi)\big)\,\mathrm{d}y-A_2\Big)^2.
\end{equation}
We treat this as a coarse size prior.

\begin{algorithm*}[t]
\caption{Training Algorithm for TA-LSDiff}
\label{alg:training}
\begin{algorithmic}[1]
\REQUIRE
    Training dataset \(\mathcal{D} = \{(I_i, y_{0,i})\}_{i=1}^N\); Denoising network \(\epsilon_\theta\); Loss weights \(\eta_1, \eta_2\); Total diffusion timesteps \(T\); Pixel--adaptive Refinement (PAR) iterations \(\tau\).

\STATE Initialize network parameters \(\theta\).
\REPEAT
    \STATE Sample \((I, y_0) \sim \mathcal{D}\); \(t \sim \mathrm{Uniform}(\{1, \dots, T\})\); \(\epsilon \sim \mathcal{N}(0, \mathbf{I})\).
    
    \COMMENT{--- Diffusion Denoising Step ---}
    \STATE Construct noisy mask: \(y_t \leftarrow \sqrt{\bar{\alpha}_t}y_0 + \sqrt{1-\bar{\alpha}_t}\epsilon\). 
    \STATE Compute the diffusion loss: \(L_{\mathrm{dpm}} \leftarrow ||\epsilon - \epsilon_\theta(y_t, I, t)||_2^2\). 

    \COMMENT{--- Level Set Step ---}
    \STATE Predict the clean mask from the noise prediction: \(\hat{y}_0 \leftarrow (y_t - \sqrt{1-\bar{\alpha}_t}\epsilon_\theta(y_t, I, t)) / \sqrt{\bar{\alpha}_t}\).
    
    \STATE Let the level set function be \(\phi \leftarrow \hat{y}_0\).
    \STATE Compute the level set energy loss: \(L_{\mathrm{lsf}} \leftarrow \mathcal{E}(\phi; I)\). 

    \STATE Initialize PAR input: \(y_{\phi,0} \leftarrow \hat{y}_\phi\).
    \STATE Compute affinity kernel \(\kappa\) from image \(I\). 
    \STATE Refine the mask: \(y_{\phi,\tau}^{i,j} = \sum_{(k,l)\in\mathcal{N}_{8}(i,j)}\kappa^{ij,kl}y_{\phi,\tau-1}^{k,l}\). 
    \STATE Compute the PAR loss: \(L_{\mathrm{par}} \leftarrow ||\hat{y}_0 - y_{\phi,\tau}||_1\). 
    
    \COMMENT{--- Gradient Update Step ---}
    \STATE Compute total loss: \(\mathcal{L} \leftarrow L_{\mathrm{dpm}} + \eta_1 L_{\mathrm{lsf}} + \eta_2 L_{\mathrm{par}}\). 
    \STATE Update network parameters \(\theta\) using a gradient step on \(\nabla_\theta \mathcal{L}\).
\UNTIL{convergence}
\end{algorithmic}
\end{algorithm*}

\subsubsection{Distance Penalty Term}
We incorporate a geodesic distance to aid in localization and suppress the background:
\begin{equation}
\mathcal{E}_{\mathrm{Distance}}(\Omega_1)=\int_{\Omega_1} D(y)\mathrm{d}y.
\end{equation}
Similarly,
\begin{equation}
\label{eq:E-dist}
\mathcal{E}_{\mathrm{Distance}}(\phi;I)=\int_{\Omega} D(y)\,H(\phi(y))\,\mathrm{d}y,
\end{equation}
where
\begin{equation}
\begin{split}
\begin{dcases}
\left| \nabla \mathcal{D}_{\Omega_1}^0(y) \right| = f(y) \quad &y \notin \Omega_1\\
\mathcal{D}_{\Omega_1}^0(y) = 0 \quad \quad \quad & y \in \Omega_1,
\end{dcases}
\end{split}
\end{equation}
and we define the normalized distance \(\mathcal{D}_{\Omega_1}(y)=\mathcal{D}_{\Omega_1}^0(y)/\|\mathcal{D}_{\Omega_1}^0\|_{L^\infty}\).
We take
\begin{equation}
f(y)=\varepsilon_{\mathcal{D}}+\beta_G\|\nabla I(y)\|^2+\nu\,\mathcal{D}_E(y).
\end{equation}
This ensures that homogeneous regions are "cheap" (flat distance) while edges are "costly" (large gradients)21. In practice, we set \(\beta_G=10^3,\ \varepsilon_{\mathcal{D}}=10^{-3}\).

\subsubsection{Overall Level Set Energy and Variation}
Our deep level set loss is defined in \eqref{total-variational}. Its gradient--flow evolution (shape derivative) is given by:
\begin{equation}
\label{eq:phi-evolution}
\begin{aligned}
\frac{\partial \phi}{\partial t}
& =-\,\frac{\partial L_{\mathrm{lsf}}}{\partial \phi} \\
& =\delta_\varepsilon(\phi) \Bigg[
\theta_1\,(e_2-e_1)
+\theta_2\,\mathrm{div}\!\Big(\frac{\nabla\phi}{\|\nabla\phi\|}\Big)-\theta_4\,D(y) \\
& \quad -\theta_3 \Big(
\small(\!\int H(\phi)\,\mathrm{d}y-A_1\small)
-\small(\!\int H(\phi)\,\mathrm{d}y-A_2\small) \Big) \Bigg].
\end{aligned}
\end{equation}
This can be discretized as \(\phi^{n+1}=\phi^{n}+\Delta t\,\partial_t\phi^{n}\). However, in our framework, we do not explicitly evolve $\phi$. Instead, we backpropagate through the mask $y_\phi$.

\subsubsection{Pixel--Adaptive Refinement}
\label{sec:par}
The segmentation results often contain local inconsistencies (e.g.,neighboring pixels with similar low--level appearance but different semantics). We therefore introduce a pixel--adaptive refinement (PAR) that updates \(\phi\) by convex combinations of local neighbors.
Given a pixel feature \(p_{ij}\), the pairwise affinity over the 8--neighborhood is defined as:

\begin{equation}
\begin{aligned}
\kappa^{ij,kl}_p &=-\Big(\frac{|p^{ij}-p^{kl}|}{\sigma^{ij}_p}\Big)^2,\\
\kappa^{ij,kl} &=\frac{\exp(\kappa^{ij,kl}_p)}{\sum_{(k,l)\in\mathcal{N}_8(i,j)}\exp(\kappa^{ij,kl}_p)}.
\end{aligned}
\end{equation}
We perform \(\tau\) iterations of gradual optimization on the \(y_\phi\):

\begin{equation}
y^{i,j}_{\phi,\tau}=\sum_{(k,l)\in\mathcal{N}_8(i,j)} \kappa^{ij,kl}\,y^{i,j}_{\phi,\tau-1},
\end{equation}
and enforce consistency via an \(\ell_1\) penalty between the \(y_{\phi}\) and the refined \(y_{\phi;\tau}\):
\begin{equation}
\label{eq:Lpar}
L_{\mathrm{par}}=\|y_\phi-y_{\phi,\tau}\|_1.
\end{equation}
We use $\tau=10$ iterations by default to improve robustness.

\begin{figure}
  \centering
  \hspace*{-0.7cm}\includegraphics[width=3.8in, trim=-0.4in 0 0 0, clip]{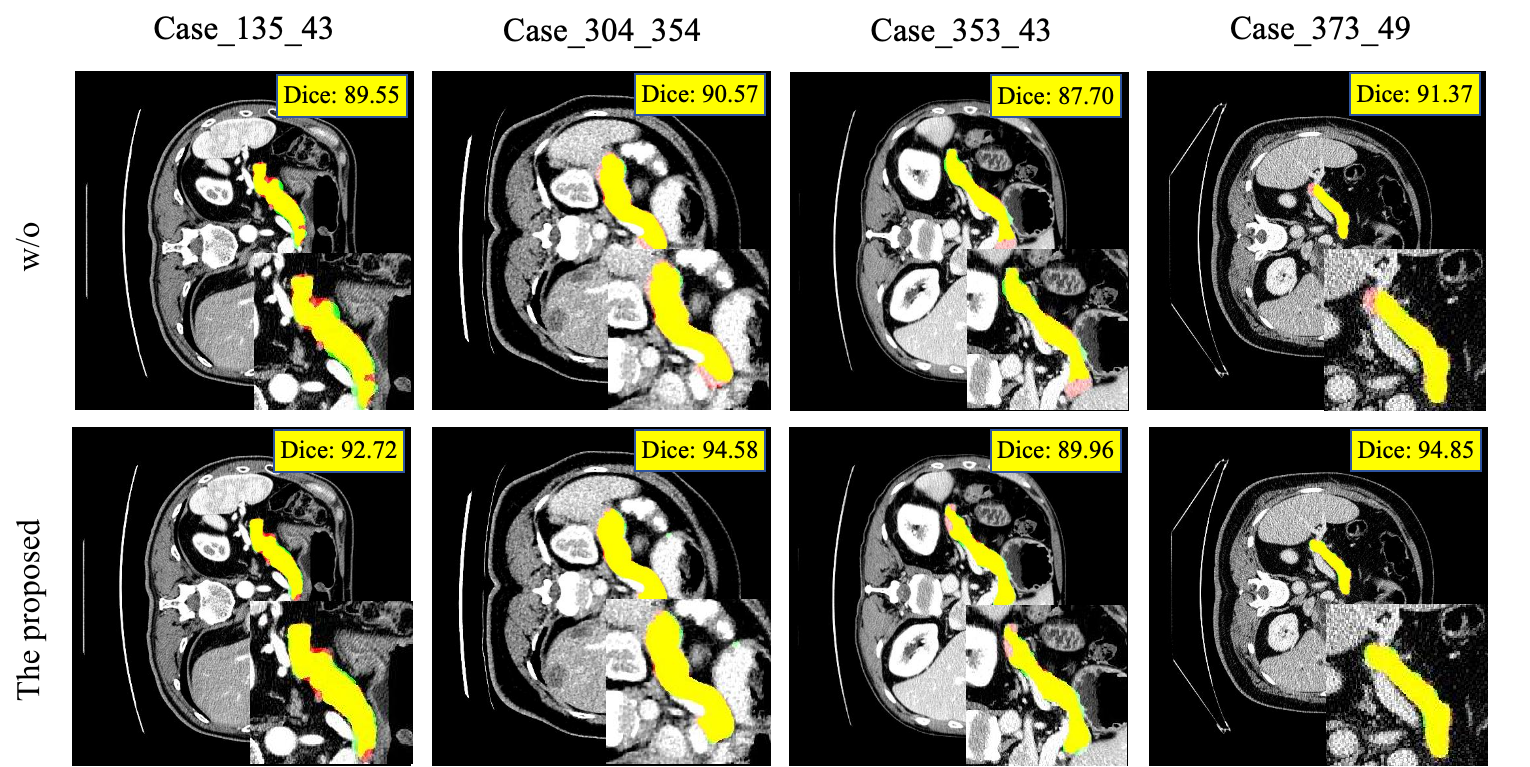}
  \caption{Comparison of segmentation results on different instances without (first row) and with (second row) the level set evolution. Contours: Red = Gold Standard, Green = Predicted Result.}
  \label{fig-duibi}
\end{figure}
\begin{figure}
  \centering
  \hspace*{-0.6cm}\includegraphics[width=3.7in, trim=-0.4in 0 0 0, clip]{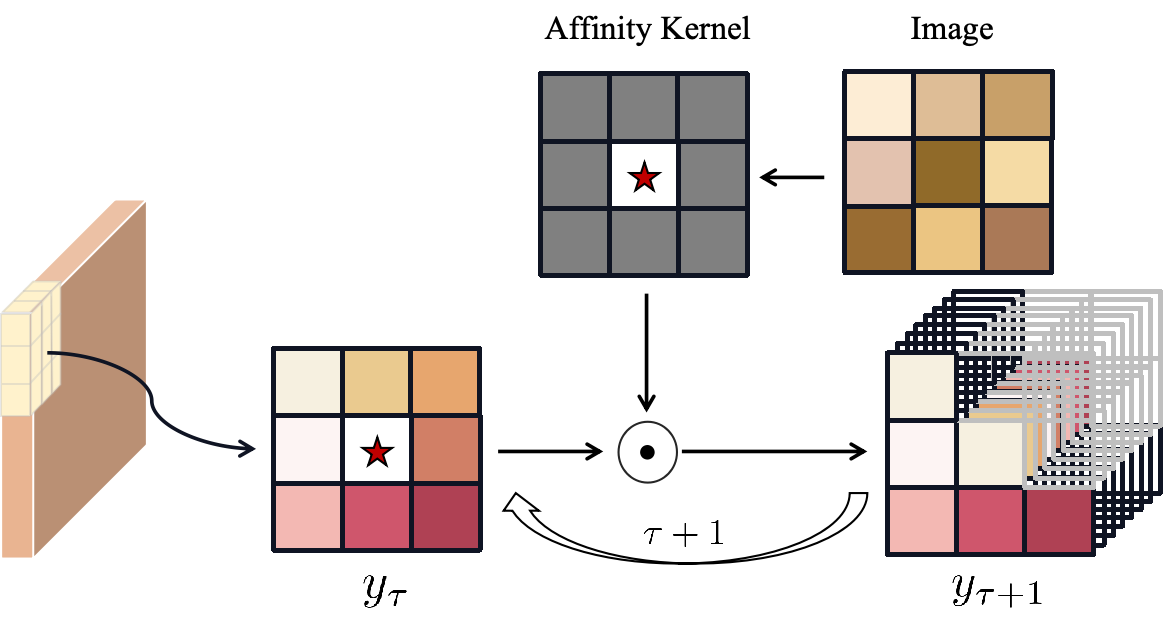}
  \caption{Conceptual illustration of Pixel--adaptive Refinement (PAR). An affinity kernel is computed for each pixel to measure its proximity to its neighbors. This kernel is then iteratively applied to the predicted mask (not the level set evolution) via adaptive convolution to obtain the refined segmentation result.}
  \label{par}
\end{figure}
\subsection{Training Loss and Inference}
\label{sec:loss-infer}

We optimize the network end-to-end with three components: a diffusion loss, the level set energy, and pixel--adaptive refinement (PAR):
\begin{equation}
\label{eq:total-loss}
\mathcal{L}=\;L_{\mathrm{dpm}}\;+\;\eta_1\,L_{\mathrm{lsf}}\;+\;\eta_2\,L_{\mathrm{par}},
\end{equation}
where $eta_i$ are weights of training loss and each term is presented in \eqref{eq:ldpm-eps}, \eqref{total-variational}, and \eqref{eq:Lpar}.

\subsubsection{DDPM Training Objective}
The network is conditioned on the image \(I\). We train it using the standard $\varepsilon$-prediction form:
\begin{equation}
\label{eq:ldpm-eps}
L_{\mathrm{dpm}} = \mathbb{E}_{t, y_0, \epsilon} \left[ w_t || \epsilon - \epsilon_{\theta}(y_t, I, t) ||_2^2 \right],
\end{equation}
with an optional weighting term \(w_t\).

\subsubsection{Network Architecture}In the diffusion probabilistic model of the segmentation process, we provide an additional input of the original image $I$ to guide the generation of segmentation results by the diffusion probabilistic model. $\varepsilon_{\theta}$ is typically a U-Net represented as follows:\begin{equation}\varepsilon_{\theta}(y_t, I, t) = D\left(E_A\left(E_B(y_t,t) + E_C(I,t), t\right), t\right).\end{equation}In this architecture, the decoder $D$ of the U-Net is conventional, while its encoder is decomposed into three networks: $E_A$, $E_B$, and $E_C$. $E_C$ represents conditional feature embedding, which embeds the original image; $E_B$ represents feature embedding of the segmentation map for the current step. The encoders consist of three convolutional stages. The residual blocks for each stage follow the structure of ResNet34, comprising two convolutional blocks with group--norm and SiLU \cite{elfwing2018sigmoid} active layer, as well as a convolutional layer. These two processed inputs have the same spatial dimensions and channel numbers and are summed up as signals. The summation is then passed onto the remaining part of U-Net's encoder $E_A$ and sent to U-Net's decoder $D$ for reconstruction. The time step $t$ is integrated with embeddings.
\begin{table}[t!]
\caption{Comparison on five--fold cross--validation}
\label{five--fold}
\centering
\newcolumntype{C}[1]{>{\centering\arraybackslash}m{#1}}
{\scriptsize
\setlength{\tabcolsep}{2pt}
\renewcommand{\arraystretch}{1.4}
\begin{tabular}{C{45pt}|C{45pt}|C{45pt}|C{45pt}|C{45pt}}
\hline
Metrics & Dice & Jaccard & Precision & Recall \\
\hline
& Average & Average & Average & Average \\[-4pt]
& (max, min) & (max, min) & (max, min) & (max, min) \\
\hline
First Fold &
91.05$\pm$4.55 (98.00, 76.92) &
83.88$\pm$7.32 (96.07, 62.49) &
94.60$\pm$4.73 (99.96, 64.52) &
88.20$\pm$7.13 (98.83, 62.85) \\
Second Fold &
90.53$\pm$4.94 (97.79, 77.16) &
83.06$\pm$7.95 (95.67, 62.81) &
93.63$\pm$5.19 (100.00, 74.95) &
88.33$\pm$8.59 (99.85, 63.62) \\
Third Fold &
92.12$\pm$4.38 (97.94, 77.16) &
85.69$\pm$7.32 (95.95, 62.81) &
93.23$\pm$5.56 (100.00, 68.78) &
91.52$\pm$6.57 (99.65, 63.80) \\
Fourth Fold &
91.05$\pm$4.98 (97.67, 77.09) &
83.93$\pm$8.08 (94.43, 62.73) &
93.67$\pm$5.55 (100.00, 66.25) &
89.26$\pm$8.44 (99.88, 64.27) \\
Fifth Fold &
90.64$\pm$5.02 (97.93, 76.85) &
83.26$\pm$8.09 (95.94, 62.40) &
94.20$\pm$5.71 (100.00, 66.41) &
88.05$\pm$8.38 (100.00, 63.61) \\
Five--fold cross &
91.09$\pm$4.80 (98.00, 76.85) &
83.98$\pm$7.80 (96.07, 62.40) &
93.87$\pm$5.37 (100.00, 64.52) &
89.09$\pm$7.94 (100.00, 62.85) \\
\hline
\end{tabular}}
\end{table}
\begin{figure}
  \centering
  \hspace*{-0.8cm}\includegraphics[width=3.8in, trim=-0.4in 0 0 0, clip]{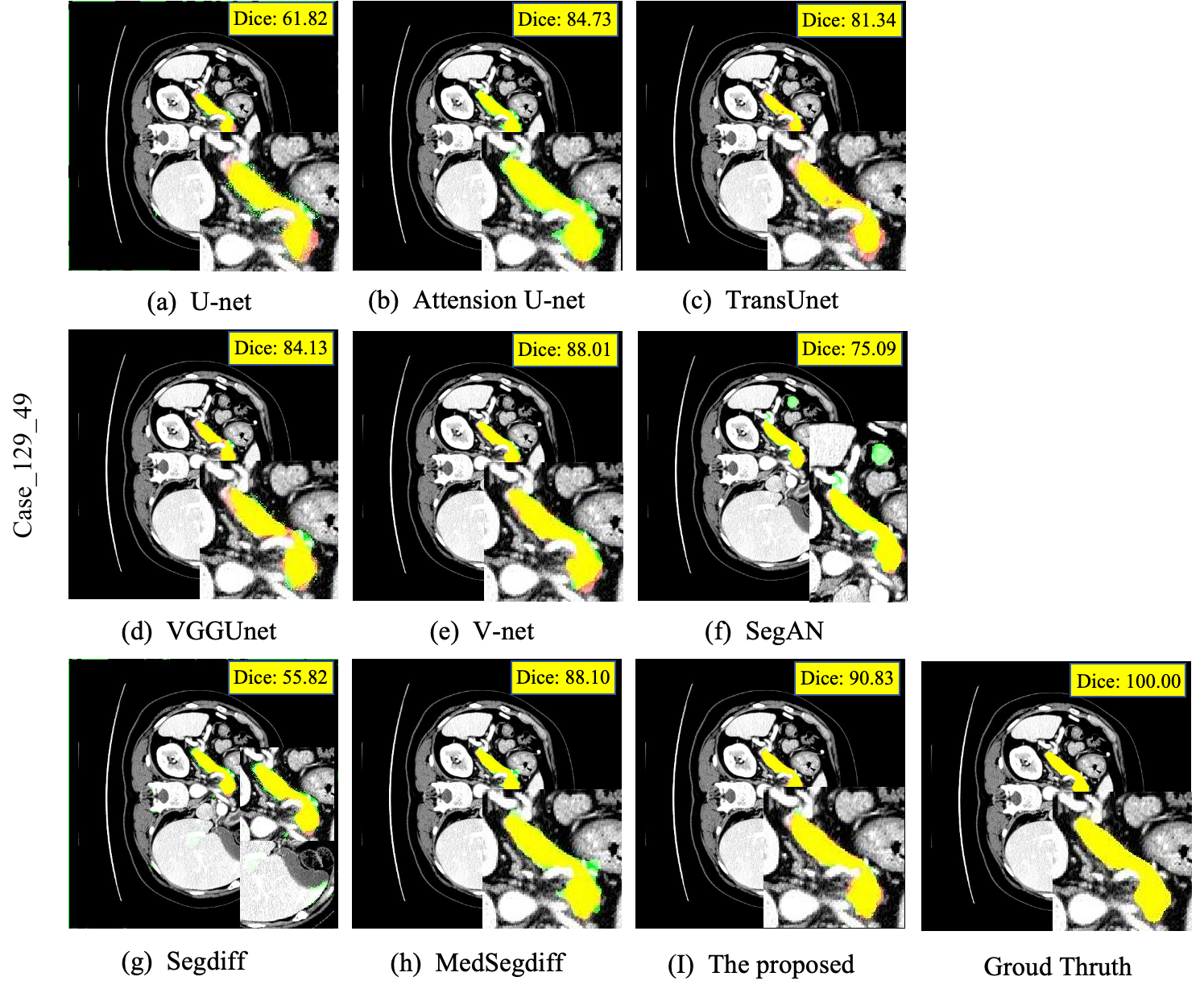}
  \caption{Comparison of segmentation results from different methods on the same instance. Contours: Red = Gold Standard, Green = Predicted Result.}
  \label{gezhong}
\end{figure}
\subsubsection{Inference and Energy Regularized Sampling.}
At inference, the model directly outputs a mask without requiring explicit geometric evolution. We inject the energy descent at each reverse step
\begin{equation}
\nabla_{y_t} \left( \log p(y_t \mid I) - \gamma_t L_{\mathrm{lsf}} \right) = \nabla_{y_t} \log p(y_t \mid I) - \gamma_t \nabla_{y_t} L_{\mathrm{lsf}}.
\end{equation}
This process is equivalent to ascending a regularized posterior $\log p(y_t\!\mid\!I)-\gamma_t L_{\mathrm{lsf}}$. This yields topology--aware, smooth, and well--connected boundaries.

\section{Experiments}
\subsection{Implemention Detail} %

This section outlines the datasets and evaluation metrics, data preprocessing methods, dataset partitioning strategy, and the experimental configurations used for model training and evaluation.

\subsubsection{Datasets} 

We evaluate on four public abdominal CT datasets with pancreas labels. AbdomenCT-1K \cite{ma2021abdomenct}: 1{,}112 scans (1{,}000 publicly released with labels), multi--organ; we extract the pancreas labels. NIH \cite{roth2015deeporgan}: 82 contrast--enhanced CT volumes with expert--verified pancreas annotations. MSD \cite{antonelli2022medical}: 281 training and 139 test cases from MSKCC with pancreas labels. WORD \cite{luo2022word}: 150 CTs (~30{,}495 slices) with pixel--level annotations of 16 organs, we extract the pancreas labels. 

\subsubsection{Evaluation Metrics}
\label{sec:metrics}
We evaluate foreground segmentation with Dice, Jaccard, Precision, and Recall. These metrics are computed using pixel--level counts of True Positives (\(TP\)), False Positives (\(FP\)), and False Negatives (\(FN\)):
\[\begin{aligned}
\text{Dice}&=\frac{2TP}{2TP+FP+FN},\quad \text{Jaccard}=\frac{TP}{TP+FP+FN},\quad \\
\text{P}&\text{recision}=\frac{TP}{TP+FP},\quad
\text{Recall}=\frac{TP}{TP+FN}.
\end{aligned}\]
All metrics range in \([0,1]\) (higher is better). 

\subsubsection{Experimental Setting}

We train our model on an RTX 3090 GPU with an initial learning rate of 0.0001, diffusion steps of 1,000 iterations, and a batch size of 2 samples. 

The weights for the loss components were determined empirically through experiments. For the main loss terms \eqref{eq:total-loss}, we set $\eta_1=0.5$ and $\eta_2 =0.005$. For the level set energy components \eqref{total-variational}, the weights were set to $\lambda_1 = 0.01$, $\lambda_2 = 0.01$, $\lambda_3 = 0.0001$, and $\lambda_4= 0.001$.

In the training process, the model is trained for a total of 400,000 to 450,000 iterations. In the testing process, instead of using one single prediction result of the model, we take the average of the 20 prediction results as the final prediction output for a single input.

\subsubsection{Datasets Setting}

For image preprocessing, all input 2D CT image slices have a size of 512×512. Pancreas densities are known to typically fall within the 40--70 Hounsfield Unit (HU) range. Therefore, by setting the window width to 250 and window level to 50, we effectively adjust the intensity values of all pixels to [-75,175] HU to better visualize the pancreas. 

In the five--fold cross--validation experiments conducted on the AbdomenCT-1K dataset, we extracted 2,000 2D slices from the first 400 volumetric cases to constitute the dataset.  These 2,000 2D slices were equally partitioned into five folds, each containing 400 slices.  During the five--fold cross--validation experiments, each fold sequentially served as the test dataset while the remaining four folds were combined for training, with the final performance metric derived from averaging results across all five folds.  

For subsequent comparative analyses, one fold of the AbdomenCT-1K dataset containing 1,600 2D training slices and 400 2D testing slices was selected as the benchmark configuration.  For the NIH, MSD, and WORD datasets, a total of 1,800 clinically validated 2D slices were curated, with 1,500 slices allocated for model training and 300 slices reserved for testing.  

\begin{table}[t!]
\caption{Comparison on the exact same dataset}
\label{exact}
\centering
\newcolumntype{C}[1]{>{\centering\arraybackslash}m{#1}} 
\setlength{\tabcolsep}{2pt}
\renewcommand{\arraystretch}{2.3}
{\scriptsize
\begin{tabular}{C{60pt}|C{50pt}|C{40pt}|C{40pt}|C{40pt}}
\hline
Methods & AbdomenCT-1K & NIH & MSD & WORD \\
\hline
Metrics & \makecell{Dice\\Jaccard} & \makecell{Dice\\Jaccard} & \makecell{Dice\\Jaccard} & \makecell{Dice\\Jaccard} \\
\hline
Segdiff \cite{amit2021segdiff} &
\makecell{60.90$\pm$12.28\\44.97$\pm$13.64} &
\makecell{59.69$\pm$11.98\\45.01$\pm$13.90} &
\makecell{59.63$\pm$12.81\\44.18$\pm$14.68} &
\makecell{59.17$\pm$12.57\\45.36$\pm$13.96} \\
MedSegdiff \cite{wu2022medsegdiff} &
\makecell{88.04$\pm$8.02\\79.46$\pm$11.52} &
\makecell{86.08$\pm$4.52\\75.84$\pm$6.81} &
\makecell{85.11$\pm$7.06\\74.72$\pm$10.50} &
\makecell{83.53$\pm$7.96\\71.82$\pm$9.51} \\
SegAN \cite{xue2018segan} &
\makecell{80.98$\pm$7.02\\68.61$\pm$9.77} &
\makecell{78.67$\pm$7.82\\68.18$\pm$8.99} &
\makecell{77.32$\pm$9.23\\68.09$\pm$11.01} &
\makecell{76.81$\pm$7.31\\67.19$\pm$9.71} \\
U\text{-}net \cite{bateriwala2019enforcing} &
\makecell{77.75$\pm$10.28\\64.75$\pm$13.90} &
\makecell{74.49$\pm$8.76\\62.67$\pm$11.22} &
\makecell{72.90$\pm$10.98\\61.71$\pm$12.77} &
\makecell{72.57$\pm$8.72\\62.56$\pm$10.01} \\
Attention U\text{-}net \cite{oktay2018attention} &
\makecell{80.49$\pm$18.60\\70.78$\pm$22.45} &
\makecell{78.81$\pm$12.66\\69.24$\pm$15.10} &
\makecell{76.38$\pm$17.81\\65.67$\pm$19.92} &
\makecell{76.31$\pm$13.31\\65.71$\pm$15.73} \\
V\text{-}net \cite{milletari2016v} &
\makecell{88.64$\pm$12.32\\81.32$\pm$15.78} &
\makecell{85.36$\pm$9.76\\77.31$\pm$12.12} &
\makecell{84.43$\pm$9.39\\76.17$\pm$11.36} &
\makecell{83.47$\pm$8.90\\73.31$\pm$11.91} \\
TransUnet \cite{chen2021transunet} &
\makecell{79.91$\pm$9.13\\70.31$\pm$10.98} &
\makecell{75.97$\pm$9.21\\67.59$\pm$11.31} &
\makecell{73.76$\pm$10.61\\65.52$\pm$12.00} &
\makecell{73.47$\pm$9.89\\65.36$\pm$11.47} \\
VGGUnet \cite{liu2022pancreas} &
\makecell{81.91$\pm$8.31\\71.14$\pm$9.87} &
\makecell{79.10$\pm$7.36\\70.09$\pm$9.31} &
\makecell{77.31$\pm$9.87\\69.71$\pm$10.97} &
\makecell{76.89$\pm$8.64\\67.62$\pm$9.94} \\
\textbf{The proposed} &
\makecell{\textbf{91.05$\pm$4.55}\\\textbf{83.88$\pm$7.32}} &
\makecell{\textbf{87.93$\pm$3.27}\\\textbf{78.56$\pm$5.11}} &
\makecell{\textbf{86.37$\pm$6.46}\\\textbf{76.37$\pm$9.88}} &
\makecell{\textbf{85.36$\pm$7.63}\\\textbf{73.84$\pm$8.83}} \\
\hline
\end{tabular}}
\end{table}

\subsection{Experimental Results and Discussion} %

This section presents comprehensive analyses of the proposed framework's performance under identical experimental configurations across multiple datasets, demonstrating significant metric improvements over state-of-the-art methods while exhibiting robust generalization capabilities. 

\subsubsection{Five--Fold Cross--Validation}

This section provides a comprehensive evaluation of the model generalization by performing five--fold cross--validation on the AbdomenCT-1K dataset. As shown in TABlE \ref{five--fold}, a quantitative comparison of all validation sections shows robust performance stability, with the proposed model achieving state-of-the-art performance across all key evaluation metrics including mean, standard deviation, maximum, and minimum. In addition, we can see that in addition to the comprehensive five--fold cross--validation results, we show the five--fold separate results, and we can see that the performance of each fold is advanced and stable. As shown in Fig.\ref{fig-duibi}, we can see that the diffusion probability model combined with the level set model significantly improves the performance of the model. Both over--segmentation and under--segmentation of the top of the pancreas of $Case\_135\_43$ are significantly improved after level set evolution. In the last three groups of instances, there are obvious undersegmentation in the results of the top and bottom of the pancreas in the first row. After the level set evolution, the corresponding results are successfully segmented. Our model can effectively compensate and improve the undersegmentation problem of the diffusion probability model, and alleviate the over--segmentation problem.

\begin{figure*}
  \centering
  \hspace*{-1cm}\includegraphics[width=6.5in, trim=-0.4in 0 0 0, clip]{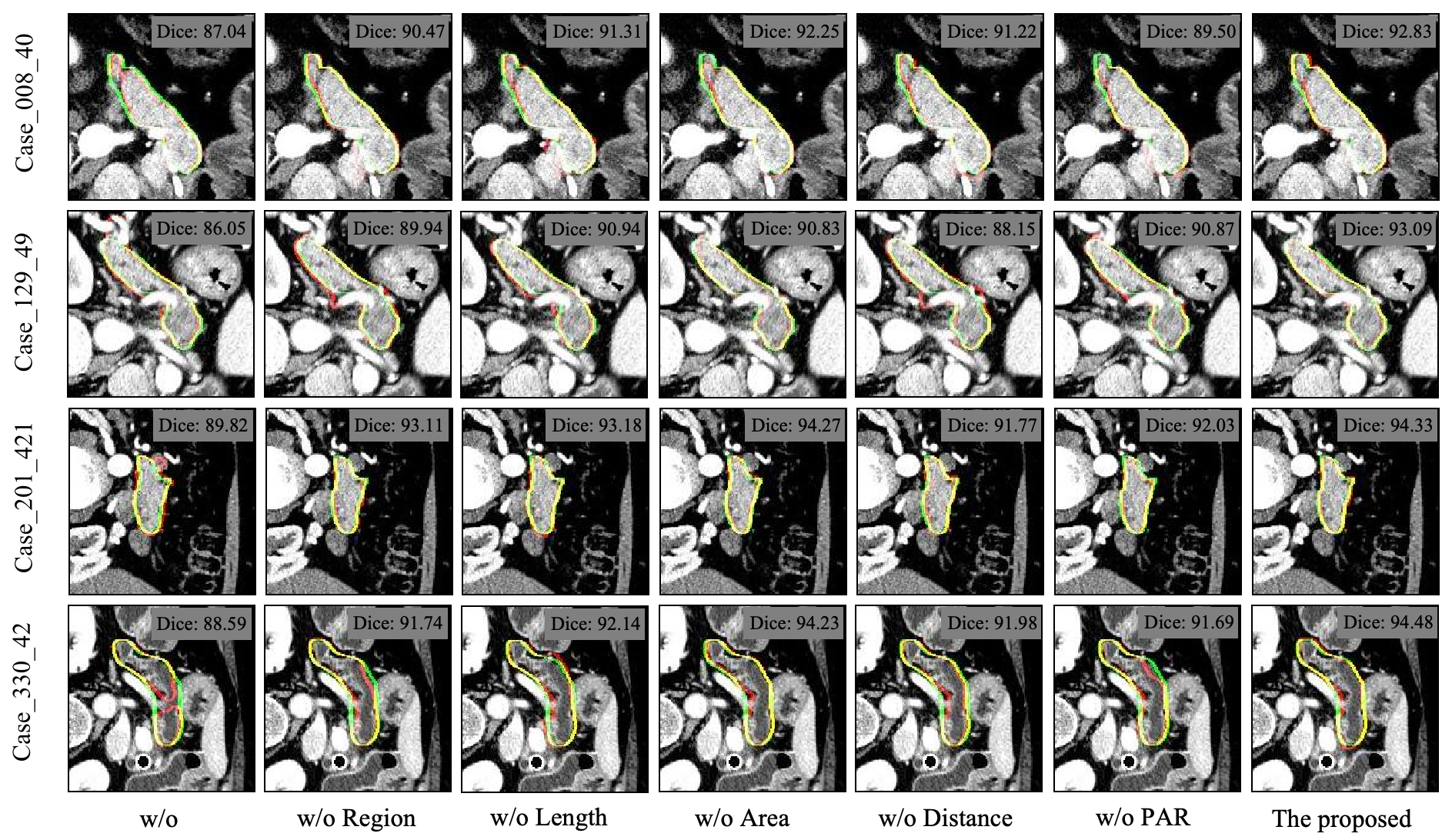}
  \caption{Comparison of segmentation results from different level set components on the same instance. Contours: Green = Gold Standard, Red = Predicted Result.}
  \label{fig:xiaorong}
\end{figure*}
\begin{table}
\caption{Comparison of other metrics}
\label{other}
\centering
\newcolumntype{C}[1]{>{\centering\arraybackslash}m{#1}}
\setlength{\tabcolsep}{3pt}
\renewcommand{\arraystretch}{1.5}
{\scriptsize
\begin{tabular}{C{27pt}|C{56pt}|C{33pt}|C{35pt}|C{35pt}|C{35pt}}
\hline
Dataset & Method & Dice & Jaccard & Precision & Recall \\
\hline
Abdoment & PanSegNet\cite{zhang2025large} & 88.31$\pm$7.24 & 79.71$\pm$10.12 & 87.77$\pm$9.29 & \textbf{90.08$\pm$8.86} \\
CT-1K  & The proposed & \textbf{91.09$\pm$4.80} & \textbf{83.98$\pm$7.80} & \textbf{93.87$\pm$5.37} & 89.09$\pm$7.94 \\
\hline
 & DAN\cite{li2021automatic} & 82.38$\pm$5.46 & 70.39$\pm$7.58 & 83.51$\pm$7.63 & 82.30$\pm$7.45 \\
 & RNN\cite{cai2019pancreas} & 83.30$\pm$5.60 & 71.80$\pm$7.70 & 84.50$\pm$6.20 & 82.80$\pm$8.37 \\
 & TEFCN\cite{liu2019automatic} & 84.10$\pm$4.91 & 72.86$\pm$6.89 & 83.60$\pm$5.85 & 85.33$\pm$8.24 \\
 & TVMS-Net\cite{chen2022pancreas} & 85.19$\pm$4.73 & 74.19$\pm$7.27 & 86.09$\pm$5.93 & 84.58$\pm$8.09 \\
 & FPF-Net\cite{chen2023fpf} & 85.41$\pm$4.47 & 74.80$\pm$6.30 & 85.60$\pm$5.90 & 85.90$\pm$6.50 \\
 & LocNet-ECTN\cite{zheng2023extension} & 85.58$\pm$3.98 & 74.99$\pm$5.86 & 86.59$\pm$6.14 & 85.11$\pm$5.96 \\
NIH & MDS-Net\cite{li2020model} & 85.70$\pm$4.10 & 75.30$\pm$6.10 & 87.40$\pm$5.20 & 84.80$\pm$7.50 \\
 & CM3D-FCN\cite{xue2019cascaded} & 85.90$\pm$5.10 & 75.70$\pm$7.60 & 87.60$\pm$4.70 & 85.20$\pm$8.90 \\
 & TPA\cite{dogan2021two} & 86.15$\pm$4.45 & 75.93$\pm$6.46 & 86.23$\pm$4.85 & 86.27$\pm$5.73 \\
 & tUNet\cite{chen2022target} & 86.38$\pm$3.18 & 76.16$\pm$4.79 & 92.00$\pm$3.47 & 81.62$\pm$4.91 \\
 & CTUNet\cite{chen2023ctunet} & 86.80$\pm$4.10 & 76.90$\pm$6.10 & 86.20$\pm$6.50 & 88.00$\pm$6.00 \\
 & MR-Net\cite{li2020multiscale} & 87.57$\pm$3.26 & \textbf{78.77$\pm$4.34} & 86.63$\pm$3.70 & \textbf{89.55$\pm$4.03} \\
 & tU-Net+\cite{chen2022target} & 87.91$\pm$2.65 & 78.52$\pm$4.14 & 90.43$\pm$3.77 & 85.77$\pm$4.61 \\
 & The proposed & \textbf{87.93$\pm$3.27} & 78.56$\pm$5.11 & \textbf{92.06$\pm$4.21} & 84.45$\pm$6.20 \\
\hline
 & CM3D-FCN\cite{xue2019cascaded} & 73.60$\pm$9.70 & 59.10$\pm$11.80 & 84.30$\pm$10.40 & 67.20$\pm$13.70 \\
MSD & TVMS-Net\cite{chen2022pancreas} & 76.60$\pm$7.30 & 62.60$\pm$9.30 & 87.70$\pm$8.30 & 69.20$\pm$12.80 \\
 & The proposed & \textbf{86.37$\pm$6.46} & \textbf{76.57$\pm$9.88} & \textbf{94.29$\pm$5.32} & \textbf{80.56$\pm$6.88} \\
\hline
WORD  & PanSegNet\cite{zhang2025large} & 80.89$\pm$7.48 & 68.51$\pm$9.60 & 85.47$\pm$12.46 & 78.17$\pm$6.77 \\
 & The proposed & \textbf{85.36$\pm$7.63} & \textbf{73.84$\pm$8.83} & \textbf{92.46$\pm$8.53} & \textbf{84.74$\pm$7.27} \\
\hline
\end{tabular}}
\end{table}
\subsubsection{Benchmark Performance on the Standard Test Set}
This section presents comparative experimental results and analyses under identical training and testing configurations. As shown in TABlE \ref{exact}, we evaluate our proposed framework against classical and state-of-the-art pancreatic segmentation models, including conventional U-Net, attention--enhanced architectures, Transformer--based variants, adversarial approach and diffusion probabilistic model. The U-Net demonstrates fundamental localization capability but suffers from severe under--segmentation in anatomically complex abdominal CT images. Integration of attention mechanisms with Transformer and VGG backbones yields measurable precision improvements, where the V-net variant with asymmetric layer configurations and multi--scale channels exhibits enhanced feature extraction capacity. While the GAN--based SegAN achieves plausible organ localization, it introduces significant background artifacts. The diffusion--based SegDiff model, lacking original image guidance, fails to accurately segment small target organs amidst complex backgrounds. The MedSegDiff incorporats attention--guided original image, shows enhanced performance but generates erroneous background speckles. Our proposed framework addresses these limitations through novel integration of deep level set loss and pixel--adaptive refinement module, achieving superior segmentation accuracy with 3.01\% Dice improvement over MedSegDiff.

As shown in Fig.\ref{gezhong}, we compare the results of applying different methods on the same instance. We can see that all the above deep learning methods can achieve ROI localization, where the results of applying the generative model (f--h) have different levels of background interference. The U-net based methods (a--e) will have different degrees of error, and the proposed model shows significantly superior performance.

\subsubsection{Multi--Dataset Robustness Evaluation}

Similar results could also be obtained on other datasets, and the proposed model demonstrated accurate and robust performance. Compared with the existing state-of-the-art methods shown in TABlE \ref{other}, the mean and variance of the proposed method are very stable. The proposed model achieved the optimal results on the main evaluation metric Dice, and the optimal and suboptimal results on the Jaccard metric. While maintaining the coverage rate, the other two auxiliary indicators also demonstrated excellent performance. As shown in Fig.\ref{abknih} and Fig.\ref{msdword}, most of our segmentation results overlap with the gold standard on four datasets. 

\subsection{Ablation Experiments} %

This section systematically investigates the contribution of individual components through controlled ablation studies employing sequential module removal, while rigorously validating the model's operational reliability and generalization capacity via statistical significance tests and cross--dataset evaluations.  

\begin{table}
\caption{Comparison of other metrics}
\label{tab:xiaorong}
\centering
\newcolumntype{C}[1]{>{\centering\arraybackslash}m{#1}}
\setlength{\tabcolsep}{3pt}
\renewcommand{\arraystretch}{1.5}
{\scriptsize
\begin{tabular}{C{32pt}|C{32pt}|C{32pt}|C{32pt}|C{32pt}|C{52pt}}
\hline
Region & Length & Area & Distance & PAR & Dice \\
\hline
 &  &  &  &  & 88.04$\pm$8.02 \\
 & \checkmark & \checkmark & \checkmark & \checkmark & 89.25$\pm$7.78 \\
\checkmark &  & \checkmark & \checkmark & \checkmark & 90.50$\pm$8.95 \\
\checkmark & \checkmark &  & \checkmark & \checkmark & 90.02$\pm$8.15 \\
\checkmark & \checkmark & \checkmark &  & \checkmark & 89.67$\pm$9.32 \\
\checkmark & \checkmark & \checkmark & \checkmark &  & 89.29$\pm$11.88 \\
\checkmark & \checkmark & \checkmark & \checkmark & \checkmark & 91.05$\pm$4.55 \\
\hline
\end{tabular}}
\end{table}
This section investigates the impact of different components in the deep level set loss and the pixel--adaptive refinement module through systematic ablation studies, as demonstrated in TABlE \ref{tab:xiaorong}.  The checkmarks in the table indicate the inclusion of specific components in each experimental configuration.  Comparative analysis reveals that the region fitting term plays a dominant role among the four components of the deep level set loss.  This superiority stems from its ability to capture regional statistical characteristics for segmentation, effectively addressing the discrete error points that may arise from the mse loss in the original diffusion probabilistic model, which primarily focuses on pixel--wise error minimization.  The geodesic distance penalty term emerges as the second most influential component, demonstrating significant efficacy in suppressing background interference and optimizing spatial localization.  The area and length terms, while comparatively less impactful, contribute to error suppression by constraining the emergence of spurious regions through geometric regularization.  These terms inherently align with the level set formulation, proving particularly powerful for tasks requiring precise regional delineation.

As shown in Fig.\ref{fig:xiaorong}, we demonstrate the advanced performance of the proposed model by comparing the performance capabilities of different components on the same instance. For example, without the region or distance terms, the model identifies spurious small regions $Case\_129\_49$. Without the length and area terms, the segmentation edges lack smoothness $Case\_129\_49$, $Case\_{330}\_{42}$, and the results fail to cover the target accurately $Case\_{008}\_{40}$, $Case\_{330}\_{42}$. Furthermore, a comparison between the final column (the proposed) and the sixth column (w/o PAR) reveals that the pixel--adaptive refinement module effectively corrects the final level set evolution results. The inclusion of the PAR module yields substantial performance improvements, primarily attributed to its capacity to enhance edge refinement by mitigating boundary roughness. This mechanism operates through neighborhood--aware feature modulation, effectively preserving local consistency while maintaining global structural integrity. The synergistic integration of all components establishes a robust framework that balances regional statistical modeling with geometric constraints, advancing the state-of-the-art in complex segmentation scenarios.

\section{Conclusion}

In this paper, we introduced TA-LSDiff, a novel framework that addresses the persistent challenges of automatic pancreas segmentation, namely its low contrast, small size, and complex topological variations. By fusing the semantic power of diffusion probabilistic models with the geometric interpretability of variational level set methods, our approach provides a robust and accurate segmentation solution.
The core of our contribution is a principled, topology--aware segmentation process. This process is motivated by our key theoretical insight: the equivalence between the Chan-Vese model's $L_2$ gradient flow and the boundary's topological derivative. Based on this, we designed an energy--guided diffusion framework where a four--term level set energy functional guides the learning process by being injected directly into the reverse diffusion step to steer the segmentation towards geometrically and topologically sound results. Furthermore, we introduced a pixel--adaptive refinement (PAR) module to enhance local consistency and produce smooth, coherent boundaries by leveraging neighborhood affinity.
Extensive experiments on four challenging public datasets (AbdomenCT-1K, NIH, MSD, and WORD) validate the superiority of TA-LSDiff. Our method achieves state-of-the-art performance, yielding Dice scores of 91.05±4.55\%, 87.93±3.27\%, 86.37±6.46\%, and 85.36±7.63\%, respectively, outperforming existing methods in both accuracy and stability. Ablation studies further confirmed the efficacy of our design, demonstrating that each component of the level set energy contributes to the final performance, with the region--fitting term playing a dominant role. The inclusion of the PAR module was also shown to be critical for refining boundary details. Importantly, TA-LSDiff achieves significant accuracy gains, and future research will focus on optimizing its computational efficiency through architectural improvements while maintaining this state-of-the-art performance.

\appendices

\section{Derivation of the Topological Derivative for Energy Functionals}
\label{app:td_derivations}
This appendix provides a detailed derivation for the topological derivative of the energy functionals used in our work. We first detail the proof for the classic Chan-Vese (CV) model and then present a proof for the more general two--phase regional energy functional, which our method employs.

\subsection{Boundary consistency of CV descent and topological drive.}
\label{app:cv_proof}

\textit{Proposition 1:} For the Chan-Vese data term, the $L_2$ gradient flow evolution is proportional to its boundary topological drive.
\[
\mathcal{T}_{\mathrm{CV}}(x) = -(f(x_0)-c_1)^2 + (f(x_0)-c_2)^2.
\]

\begin{proof}
The topological derivative  is defined as the first--order variation of the energy functional with respect to the nucleation of an infinitesimal hole \(B_{\rho,x}\) at a point \(x_0\). For the CV energy, this corresponds to moving the ball \(B_{\rho,x}\) from the inner region \(\Omega_1\) to the outer region \(\Omega_2\). This perturbation affects the regional means \(c_1\) and \(c_2\). To the first order, the perturbed means \(c_1^\rho\) and \(c_2^\rho\) are given by:
\begin{align*}
c^{\rho}_{1}
&=\frac{\int_{\Omega_{1}\setminus B_{\rho,x}} f\,\mathrm{d}x}{|\Omega_{1}|-\pi\rho^2}
= c_{1}-\frac{\int_{B_{\rho,x}}(f-c_{1})\,\mathrm{d}x}{|\Omega_{1}|-\pi\rho^2},\\
c^{\rho}_{2}
&=\frac{\int_{\Omega_{2}\cup B_{\rho,x}} f\,\mathrm{d}x}{|\Omega_{2}|+\pi\rho^2}
= c_{2}+\frac{\int_{B_{\rho,x}}(f-c_{2})\,\mathrm{d}x}{|\Omega_{2}|+\pi\rho^2}.
\end{align*}
The topological derivative is then calculated by substituting these perturbed means into the definition of the CV energy functional and taking the limit as \(\rho \to 0\):
$$
\begin{aligned}
&\mathcal{T}_{\mathrm{CV}}(x) = \lim_{\rho\to 0}\frac{1}{\pi\rho^{2}}\Bigg\{
\int_{\Omega_{1}\setminus B_{\rho,x}}(f-c_{1}^{\rho})^{2}\,\mathrm{d}x-\int_{\Omega_{1}}(f-c_{1})^{2}\,\mathrm{d}x \\
&\qquad\qquad\quad  + \int_{\Omega_{2}\cup B_{\rho,x}}(f-c_{2}^{\rho})^{2}\,\mathrm{d}x-\int_{\Omega_{2}}(f-c_{2})^{2}\,\mathrm{d}x \Bigg\} \\[4pt]
&= \lim_{\rho\to 0}\frac{1}{\pi\rho^{2}}\Bigg\{
\int_{\Omega_{1}\setminus B_{\rho,x}}(f-c_{1})^{2}\,\mathrm{d}x
-\frac{\Big(\int_{B_{\rho,x}}(f-c_{1})\,\mathrm{d}x\Big)^{2}}{|\Omega_{11}|-\pi\rho^{2}}\\
&\qquad\qquad\quad -\int_{\Omega_{1}}(f-c_{1})^{2}\,\mathrm{d}x + \int_{\Omega_{2}\cup B_{\rho,x} }(f-c_{2})^{2}\,\mathrm{d}x \\
&\qquad\qquad\quad -\frac{\Big(\int_{B_{\rho,x}}(f-c_{2})\,\mathrm{d}x\Big)^{2}}{|\Omega_{2}|+\pi\rho^{2}}
-\int_{\Omega_{2}}(f-c_{2})^{2}\,\mathrm{d}x \Bigg\} \\[4pt]
&= \lim_{\rho\to 0}\frac{1}{\pi\rho^{2}}\Bigg\{
-\int_{B_{\rho,x}}(f-c_{11})^{2}\,\mathrm{d}x
+ \int_{B_{\rho,x}}(f-c_{12})^{2}\,\mathrm{d}x \\
&\qquad-\frac{\Big(\int_{B_{\rho,x}}(f-c_{11})\,\mathrm{d}x\Big)^{2}}{|\Omega_{11}|-\pi\rho^{2}}
-\frac{\Big(\int_{B_{\rho,x}}(f-c_{12})\,\mathrm{d}x\Big)^{2}}{|\Omega_{12}|+\pi\rho^{2}}
\Bigg\} \\[4pt]
&= -\,(f(x_0)-c_{11})^{2} + (f(x_0)-c_{12})^{2}.
\end{aligned}
$$
The result is precisely the topological drive \(\mathcal{T}_{\mathrm{CV}}(x)\). This proves that the direction of steepest energy descent under a topological perturbation at the boundary is identical to the direction of the CV model's $L_2$ gradient flow.

\end{proof}

\subsection{Boundary consistency of two--phase regional energy descent and topological drive.}
\label{app:region_proof}
We now generalize the concept to the two--phase regional energy functional used in our method, which is based on a Gaussian statistical model for each region.

\textit{Proposition 2:}  Let the total energy functional be the sum of energies from two disjoint domains:
$$
F(\Omega_1, \Omega_2) = F_1(\Omega_1) + F_2(\Omega_2) = \int_{\Omega_1} e_1(x)\,\mathrm{d}x + \int_{\Omega_2} e_2(x)\,\mathrm{d}x.
$$
Here, $e_i(x) = \log(\sigma_i^2) + \frac{(f(x)-c_i)^2}{\sigma_i^2}$ is the negative log--likelihood, with $(c_i, \sigma_i^2)$ being the mean and variance of an intensity function $f(x)$ over the domain $\Omega_i$. The topological derivative, corresponding to moving an infinitesimal region $B_{\rho,x}$ from $\Omega_1$ to $\Omega_2$ at a point $x_0$, is given by:
\[
\mathcal{T}_{\mathrm{F_\Omega}}(x) = - e_1(x_0) + e_2(x_0) .
\]

\begin{proof}
The proof proceeds by analyzing the energy change resulting from the perturbation. The new domains are $\Omega'_1 = \Omega_1 \setminus B_{\rho,x}$ and $\Omega'_2 = \Omega_2 \cup B_{\rho,x}$. The total change in energy is the sum of the changes in each domain:
$$
\Delta F = \Delta F_1 + \Delta F_2 = [F_1(\Omega'_1) - F_1(\Omega_1)] + [F_2(\Omega'_2) - F_2(\Omega_2)].
$$
We analyze each component separately. 

\textbf{Step 1: Simplification of the Energy Functional}

We first simplify the integral form of the functional $F(\Omega_i)$. By distributing the integral and noting that $c_i$ and $\sigma_i^2$ are constant with respect to the integration variable $x$, we have:
\begin{align*}
F(\Omega_i) &= \int_{\Omega_i} \left( \log(\sigma_i^2) + \frac{(f(x)-c_i)^2}{\sigma_i^2} \right) \,\mathrm{d}x \\
&= \log(\sigma_i^2) \int_{\Omega_i} \mathrm{d}x + \frac{1}{\sigma_i^2} \int_{\Omega_i} (f(x)-c_i)^2 \,\mathrm{d}x \\
&= |\Omega_i|\log(\sigma_i^2) + \frac{1}{\sigma_i^2} \int_{\Omega_i} (f(x)-c_i)^2 \,\mathrm{d}x.
\end{align*}
By the definition of variance, $\sigma_i^2 = \frac{1}{|\Omega_i|} \int_{\Omega_i} (f(x)-c_i)^2 \,\mathrm{d}x$. Substituting this into the second term yields:
$$
F(\Omega_i) = |\Omega_i|\log(\sigma_i^2) + \frac{1}{\sigma_i^2} (|\Omega_i|\sigma_i^2) = |\Omega_i|(\log(\sigma_i^2) + 1).
$$
This simplified form is used for the subsequent analysis.

\vspace{1em}

\textbf{Step 2: Energy Change from Region Removal in $\Omega_1$}

When a small region $B_{\rho,x}$ (centered at $x_0$ with area $|B_{\rho,x}| \to 0$) is removed, the domain becomes $\Omega_1' = \Omega_1 \setminus B_{\rho,x}$. The change in the energy functional, $\Delta F_1$, is:
\begin{align*}
\Delta F_1 &= F_1(\Omega_1') - F_1(\Omega_1) \\
&= (|\Omega_1| - |B_{\rho,x}|)(\log((\sigma_1')^2) + 1) - |\Omega_1|(\log(\sigma_1^2) + 1) \\
&= |\Omega_1|\log\left(\frac{(\sigma_1')^2}{\sigma_1^2}\right) - |B_{\rho,x}|(\log((\sigma_1')^2) + 1).
\end{align*}

\vspace{1em}
\textbf{Step 3: First--Order Approximation of the New Variance $(\sigma_1')^2$}

To evaluate $\Delta F_1$, we need the first--order relationship between the new variance $(\sigma_1')^2$ and the original variance $\sigma_1^2$. This requires a Taylor expansion for the new mean $c_1'$ and $(\sigma_1')^2$ in terms of the small perturbation $|B_{\rho,x}|$. A detailed algebraic manipulation is required, ignoring terms of $O(|B_{\rho,x}|^2)$ and higher.
Simplified variance $\sigma^2$:
\[
\sigma_1^{2} \;=\; \frac{1}{\Omega_1}\!\int_{\Omega_1} (f(x)-c_1)^{2}\,\mathrm{d}x 
\;=\; \frac{1}{\Omega_1}\!\int_{\Omega_1} f(x)^{2}\,\mathrm{d}x - c_1^{2}.
\]
Updated second moment and variance $\sigma_1'^2$. Since $\int_{\Omega_1} f(x)^{2} = |\Omega_1|(\sigma_1^{2}+c_1^{2})$,
\[
\begin{aligned}
\int_{\Omega_1'} f(x)^{2} 
& \;=\; \int_{\Omega_1} f(x)^{2} - \int_{B_{\rho,x}} f(x)^{2}\\
&\;\approx\; |\Omega_1|(\sigma_1^{2}+c_1^{2}) - f(x_{0})^{2}\,|B_{\rho,x}| .
\end{aligned}
\]

\[
\begin{aligned}
{\sigma_1^{'}}^{2} & \;=\; \frac{1}{\Omega_1^{'}}\!\int_{\Omega_1^{'}} f(x)^{2}\,\mathrm{d}x - {c_1^{'}}^{2}\\
&\;\approx\; \frac{1}{\Omega_1^{'}}\Big(|\Omega|\Big(\sigma^{2}+c^{2}\Big) - |B_{\rho,x}|f(x_{0})\Big) \\
& \quad- \Big(\frac{|\Omega_1|c_1-|B_{\rho,x}|f(x_{0})}{|\Omega_1|-|B_{\rho,x}|}\Big)^{2}.
\end{aligned}
\]
Expanding and neglecting $O((b/m)^{2})$ terms gives

$$\sigma_1'^2 \;\approx\; \sigma_1^{2} \;-\; \frac{|B_{\rho,x}|}{|\Omega_1|}\,
\Big( (f(x_{0})-c_1)^{2} - \sigma_1^{2} \Big)
\;. \;$$

\vspace{1em}
\textbf{Step 4: Calculation of the Topological Derivative}

The topological derivative is defined as the limit of the rate of change:
$$
\begin{aligned}
\mathcal{T}_{\mathrm{F_{\Omega_1}}}(x) &= \lim_{|B_{\rho,x}| \to 0} \frac{\Delta F_1}{|B_{\rho,x}|}\\
&= \lim_{|B_{\rho,x}| \to 0} \left[ \frac{|\Omega_1|}{|B_{\rho_1,x}|}\log\left(\frac{(\sigma_1')^2}{\sigma_1^2}\right) - (\log((\sigma_1')^2) + 1) \right].
\end{aligned}
$$
We handle the logarithm using the first--order Taylor expansion $\log(1+x) \approx x$:
$$
\log\left(\frac{(\sigma_1')^2}{\sigma_1^2}\right) = \log\left(1 + \frac{(\sigma_1')^2 - \sigma_1^2}{\sigma_1^2}\right) \approx \frac{(\sigma_1')^2 - \sigma_1^2}{\sigma_1^2}.
$$
Substituting the approximation for $(\sigma_1')^2$ from Step 3:
$$
\begin{aligned}
\log\left(\frac{(\sigma_1')^2}{\sigma_1^2}\right) &\approx \frac{1}{\sigma_1^2} \left[ -\frac{|B_{\rho,x}|}{|\Omega_1|}\left( (f(x_0)-c_1)^2 - \sigma_1^2 \right) \right] \\
&= -\frac{|B_{\rho,x}|}{|\Omega_1|\sigma_1^2}\left( (f(x_0)-c_1)^2 - \sigma_1^2 \right).
\end{aligned}
$$
Now, we substitute this result back into the expression for $\Delta F_1$:
\begin{align*}
\Delta F_1 &\approx |\Omega| \left[ -\frac{|B_{\rho_1,x}|}{|\Omega_1|\sigma_1^2}\left( (f(x_0)-c_1)^2 - \sigma_1^2 \right) \right]\\
& \quad - |B_{\rho_1,x}|(\log(\sigma_1^2) + 1) \\
&= -\frac{|B_{\rho,x}|}{\sigma_1^2}\left( (f(x_0)-c_1)^2 - \sigma_1^2 \right) - |B_{\rho,x}|(\log(\sigma_1^2) + 1) \\
&= -|B_{\rho,x}| \left[ \frac{(f(x_0)-c_1)^2}{\sigma_1^2} - 1 + \log(\sigma_1^2) + 1 \right] \\
&= -|B_{\rho,x}| \left[ \log(\sigma_1^2) + \frac{(f(x_0)-c_1)^2}{\sigma_1^2} \right].
\end{align*}
Recalling the definition $e_1(x_0) = \log(\sigma_1^2) + \frac{(f(x_0)-c_1)^2}{\sigma_1^2}$, we have:
$$
\Delta F_1 \approx -|B_{\rho,x}| \cdot e_1(x_0).
$$

\vspace{1em}
\textbf{Step 5: Energy Change from Region Addition to $\Omega_2$}

This part analyzes the effect of adding the region $B_{\rho,x}$ to $\Omega_2$. The change in energy $\Delta F_2$ is:
$$
\begin{aligned}
\Delta F_2 &= F_2(\Omega_2 \cup B_{\rho,x}) - F_2(\Omega_2) \\
&= (|\Omega_2| + |B_{\rho,x}|)(\log((\sigma'_2)^2) + 1) - |\Omega_2|(\log(\sigma_2^2) + 1).
\end{aligned}
$$
This simplifies to:
$$
\Delta F_2 = |\Omega_2|\log\left(\frac{(\sigma'_2)^2}{\sigma_2^2}\right) + |B_{\rho,x}|(\log((\sigma'_2)^2) + 1).
$$
The first--order approximation for the new variance $(\sigma'_2)^2$ when adding a region is given by:
$$
(\sigma'_2)^2 \approx \sigma_2^2 + \frac{|B_{\rho,x}|}{|\Omega_2|}\left( (f(x_0)-c_2)^2 - \sigma_2^2 \right).
$$
Using the Taylor expansion $\log(1+x) \approx x$, we find:
$$
\log\left(\frac{(\sigma'_2)^2}{\sigma_2^2}\right) \approx \frac{(\sigma'_2)^2 - \sigma_2^2}{\sigma_2^2} \approx \frac{|B_{\rho,x}|}{|\Omega_2|\sigma_2^2}\left( (f(x_0)-c_2)^2 - \sigma_2^2 \right).
$$
Substituting this back into the expression for $\Delta F_2$:
\begin{align*}
\Delta F_2 &\approx |\Omega_2| \left[ \frac{|B_{\rho,x}|}{|\Omega_2|\sigma_2^2}\left( (f(x_0)-c_2)^2 - \sigma_2^2 \right) \right] \\
& \quad + |B_{\rho,x}|(\log(\sigma_2^2) + 1) \\
&= \frac{|B_{\rho,x}|}{\sigma_2^2}\left( (f(x_0)-c_2)^2 - \sigma_2^2 \right) + |B_{\rho,x}|(\log(\sigma_2^2) + 1) \\
&= |B_{\rho,x}| \left[ \frac{(f(x_0)-c_2)^2}{\sigma_2^2} - 1 + \log(\sigma_2^2) + 1 \right] \\
&= |B_{\rho,x}| \left[ \log(\sigma_2^2) + \frac{(f(x_0)-c_2)^2}{\sigma_2^2} \right] \approx |B_{\rho,x}| \cdot e_2(x_0).
\end{align*}

\vspace{1em}
\textbf{Step 6: Total Change and Final Result}

Combining the results from Step 4 and Step 5, the total change in energy is:
$$
\Delta F = \Delta F_1 + \Delta F_2 \approx -|B_{\rho,x}| \cdot e_1(x_0) + |B_{\rho,x}| \cdot e_2(x_0).
$$
The topological derivative is the limit of the rate of this change as $|B_{\rho,x}| \to 0$:
$$
\begin{aligned}
\mathcal{T}_{\mathrm{F_\Omega}}(x) &= \lim_{|B_{\rho,x}| \to 0} \frac{\Delta F}{|B_{\rho,x}|} \\
&= \lim_{|B_{\rho,x}| \to 0} \frac{|B_{\rho,x}|(-e_1(x_0,) + e_2(x_0))}{|B_{\rho,x}|} \\
&= -e_1(x_0) + e_2(x_0).
\end{aligned}
$$
This completes the proof.
\end{proof}

\section*{References}
\bibliographystyle{IEEEtran}
\bibliography{references}  
\end{document}